\documentclass{article} %
\usepackage{iclr2026_conference,times}

\usepackage{amsmath,amsfonts,bm}

\def\eqref#1{equation~\ref{#1}}

\def\1{\bm{1}}

\DeclareMathAlphabet{\mathsfit}{\encodingdefault}{\sfdefault}{m}{sl}
\SetMathAlphabet{\mathsfit}{bold}{\encodingdefault}{\sfdefault}{bx}{n}

\input{insbox.tex}

\usepackage{hyperref}
\usepackage{url}
\usepackage{graphicx}
\usepackage{booktabs}
\usepackage{multirow}
\usepackage{array}
\usepackage{makecell}
\usepackage[table]{xcolor}
\usepackage{pifont}
\usepackage{subcaption}
\usepackage{algorithm}
\usepackage{algorithmic}
\usepackage{mathtools}
\usepackage{amsthm}
\usepackage{amssymb}
\usepackage{wrapfig}
\usepackage{tcolorbox}          
\usepackage{pifont}

\definecolor{lightgrey}{RGB}{242, 242, 242}
\definecolor{average}{RGB}{240, 204, 126}
\definecolor{last}{RGB}{225, 191, 192}
\definecolor{transfer}{RGB}{191, 191, 225}
\definecolor{diag}{RGB}{151, 205, 112}

\definecolor{darkblue}{RGB}{84, 112, 198}
\definecolor{lightgreen}{RGB}{145, 204, 117}
\definecolor{lightyellow}{RGB}{250, 200, 88}
\definecolor{lightred}{RGB}{238, 102, 102}
\definecolor{lightblue}{RGB}{115, 192, 222}

\definecolor{outergrey}{HTML}{CCCCCC}  %
\definecolor{questionbg}{HTML}{FFF8F0}   %
\definecolor{gspobg1}{HTML}{F0F8FF}    %
\definecolor{gspobg2}{HTML}{FFF2F2}    %
\definecolor{oursbg}{HTML}{F2FAF2}     %
\definecolor{titlebrown}{HTML}{8B4513}  %

\newtcolorbox{promptbox}[2][Prompt]{
colback=black!5!white,
arc=5pt, 
boxrule=0.5pt,
fonttitle=\bfseries,
title=#1, 
before upper={\small}, fontupper=\fontfamily{ptm}\selectfont,
colframe=#2,
}

\newtcolorbox{outerbox}{
    colback=white,
    colframe=outergrey,
    arc=0mm,          %
    boxrule=1.2pt,    %
    boxsep=0.5pt,       %
    left=0.5pt,         %
    right=0.5pt,        %
    top=0.5pt,          %
    bottom=0.5pt,       %
}

\newtcolorbox{innerblock}[1]{
    colback=#1,       %
    colframe=white,
    arc=0mm,
    boxrule=0pt,
    top=1pt, bottom=1pt, left=1pt, right=1pt,
    before skip=1pt, after skip=1pt,
}

\newtheorem{proposition}{Proposition}[section]

\title{RL Forgets! Towards Continual Policy \\ Optimization}

\author{
  Mao-Lin Luo\textmd{\textsuperscript{1,2}}\footnotemark[1]\textmd{,} ~Zhe-Xu Wang\textmd{\textsuperscript{1,2}}\footnotemark[1]\textmd{,} ~Zi-Hao Zhou\textmd{\textsuperscript{1,2}}\footnotemark[1]\textmd{,} ~Bo Ye\textmd{\textsuperscript{1,2,3}}\textmd{,} ~Jian Zhao\textmd{\textsuperscript{3,4}}\textmd{,} \\ ~\textbf{Min-Ling Zhang}\textmd{\textsuperscript{1,2}}\textmd{,} ~\textbf{Tong Wei}\textmd{\textsuperscript{1,2}}\footnotemark[2] \\
  \textsuperscript{1}School of Computer Science and Engineering, Southeast University, Nanjing 210096, China\\ \textsuperscript{2}Key Laboratory of Computer Network and Information Integration (Southeast University), \\
  ~~Ministry of Education, China\\ 
  \textsuperscript{3}Zhongguancun Academy \\
  \textsuperscript{4}Zhongguancun Institute of Artificial Intelligence \\
}

\iclrfinalcopy %
\begin{document}

\maketitle

\footnotetext[1]{Equal contribution.}
\footnotetext[2]{Corresponding author.}

\begin{abstract}

Continual post-training is becoming a central paradigm for adapting vision-language models to evolving tasks. Recent work has increasingly favored reinforcement learning over supervised fine-tuning, driven by the belief that reinforcement learning is inherently less prone to forgetting. However, the belief remains insufficiently validated, as existing evidence is largely drawn from outdated or homogeneous benchmarks. We revisit this assumption under recent and diverse multimodal reasoning tasks. To this end, we introduce MRCL, a Multimodal Reasoning Continual Learning benchmark. Experiments on MRCL show that standard reinforcement learning still suffers from severe catastrophic forgetting during continual post-training. We trace this failure to an objective mismatch: the KL regularization used in common policy optimization methods is evaluated on current-task data, whereas forgetting is caused by behavioral drift on prior-task distributions. To address this problem, we propose Continual Policy Optimization (CPO), a replay-free framework grounded in a prior-task behavioral KL objective. CPO relaxes the intractable historical KL constraint into sparse parameter-movement regularization, limiting policy drift without storing old data. Extensive experiments across multiple model scales show that CPO consistently reduces forgetting while preserving, and in some cases improving, pretrained model capabilities. On Qwen3-VL-8B, CPO reduces forgetting by 13.7\% and improves pretrained capability by 7.0\%. The implementation code is available at \href{https://github.com/MaolinLuo/CPO}{https://github.com/MaolinLuo/CPO}.

\end{abstract}

\section{Introduction}
\label{sec:introduction}

Vision-language models (VLMs) have demonstrated strong performance across a wide range of real-world tasks \citep{comanici2025gemini, achiam2023gpt, radford2021learning}. As VLMs are increasingly deployed in dynamic environments, they require continually adapt to new data and tasks. However, such a process is hindered by the fundamental challenge of catastrophic forgetting \citep{french1999catastrophic}, where learning new tasks degrades previously acquired capabilities. Most prior work addresses this problem under supervised fine-tuning (SFT) \citep{gu2026spectral, chen2025sefe, kurniawan2024evolving}. Recently, reinforcement learning (RL) has emerged as a promising alternative for continual post-training \citep{lai2025reinforcement}, with several studies reporting substantially less forgetting than SFT \citep{shenfeld2026rls, zhang2026why, jin2025rl, chen2025retaining}. These findings suggest an appealing possibility: \emph{is RL intrinsically resistant to forgetting in continual post-training?}

\begin{figure}[!ht]
    \centering

    \begin{subfigure}[b]{0.325\linewidth}
        \centering
        \includegraphics[width=\linewidth]{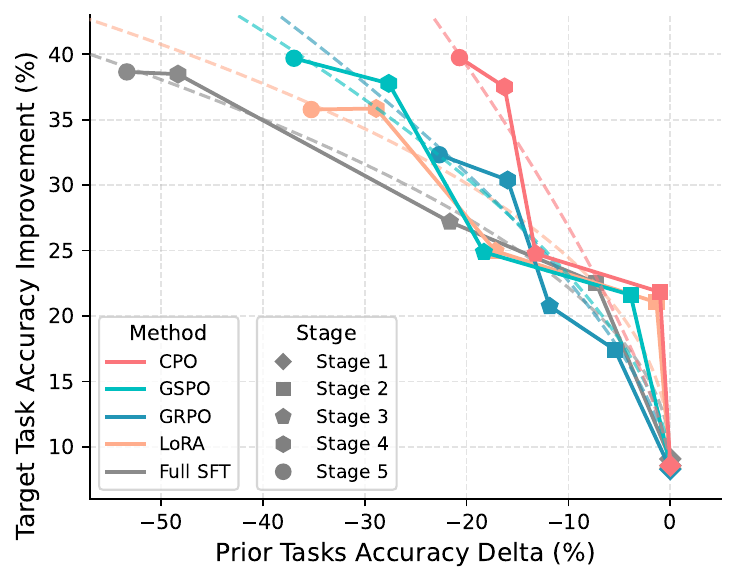}
        \caption{Qwen3-VL-2B}
        \label{fig:intro2B} %
    \end{subfigure}
    \begin{subfigure}[b]{0.325\linewidth}
        \centering
        \includegraphics[width=\linewidth]{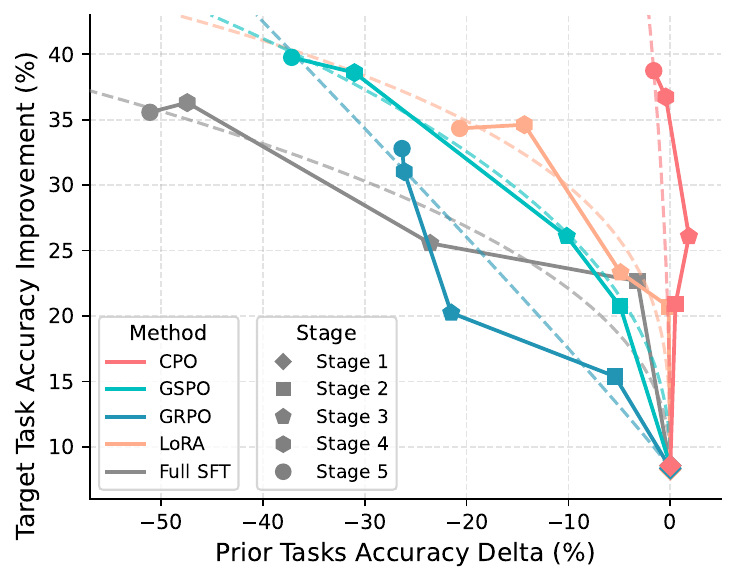}
        \caption{Qwen3-VL-4B}
        \label{fig:intro4B}
    \end{subfigure}
    \begin{subfigure}[b]{0.325\linewidth}
        \centering
        \includegraphics[width=\linewidth]{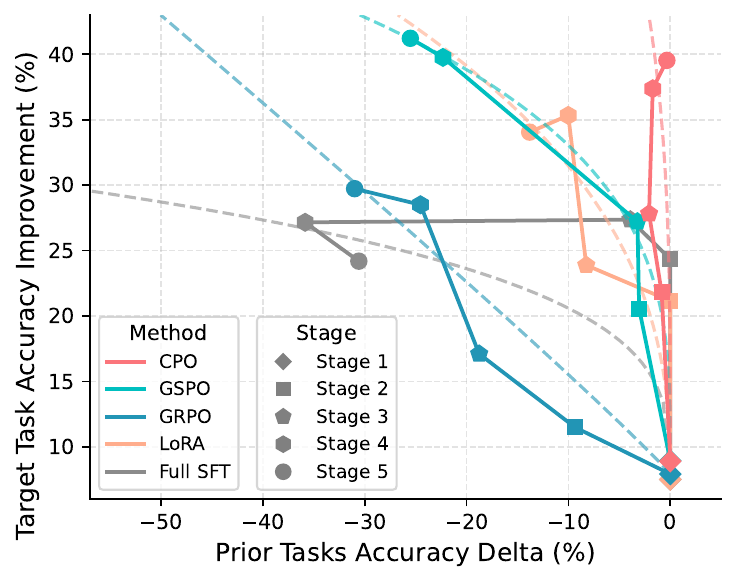}
        \caption{Qwen3-VL-8B}
        \label{fig:intro8B}
    \end{subfigure}
    \caption{Plasticity–stability Pareto frontier during continual post-training. The y-axis reports the cumulative mean accuracy improvement (CAI) on current tasks, while the x-axis reports the cumulative mean accuracy delta (CAD) on previously learned tasks. Definitions of CAI and CAD are in Appendix~\ref{sec:caicad}. Both RL and SFT exhibit catastrophic forgetting, whereas our method preserves prior knowledge more effectively while maintaining plasticity.}
    \label{fig:intro}
\end{figure}

We show that the current evidence is incomplete. Existing evaluations have two blind spots that can make forgetting appear milder than it is. First, several benchmarks are old relative to the base models being adapted. For example, \citet{lai2025reinforcement} train on datasets released between 2018 and 2023 while using Qwen2.5-VL \citep{bai2025qwen25vltechnicalreport}, a model released in 2025; such datasets may overlap with modern pretraining corpora, making the corresponding capabilities easier to retain. Second, some studies evaluate RL on narrow task distributions; for instance, \citet{zhang2026why} draw conclusions from a single jigsaw task. To provide a more reliable evaluation, we introduce MRCL, a Multimodal Reasoning Continual Learning benchmark built from diverse multimodal datasets released in 2025 or later. Experiments on MRCL reveal that continual RL still suffers from catastrophic forgetting, as shown in Fig.~\ref{fig:intro}, challenging the view that RL naturally solves continual learning.

We next examine why existing RL objectives fail to prevent forgetting. The key issue is an objective mismatch. \cite{shenfeld2026rls} argue that the KL regularization term in GRPO \citep{shao2024deepseekmath} can directly reflect forgetting, whereas \cite{lai2025reinforcement} and \cite{chen2025retaining} suggest that KL regularization is not the main factor behind reduced forgetting. Our experiments further show that the GRPO KL constraint does not effectively mitigate forgetting; instead, it can reduce model plasticity when learning new tasks. GRPO evaluates KL divergence on current-task data, $\mathbb{E}_{x \sim \textcolor{red}{\mathcal{T}_{new}}}\left[\operatorname{KL}\left(\pi_{\theta} \| \pi_{old} \right)\right]$, which is mainly designed to prevent over-optimization \citep{shao2024deepseekmath}. Recent methods such as DAPO \citep{yu2025dapo} and GSPO \citep{zheng2025group} even remove this KL term to improve learning capacity. In contrast, catastrophic forgetting is caused by behavioral drift on previously learned tasks. Our theoretical analysis shows that the relevant constraint should be evaluated on prior-task data, $\mathbb{E}_{x \sim \textcolor{blue}{\mathcal{T}_{old}}}\left[\operatorname{KL}\left(\pi_{\theta} \| \pi_{old} \right)\right]$. Current-task KL can serve as a forgetting proxy only when the new-task distribution is closed to the prior-task distribution. In continual learning, prior-task data are often unavailable due to efficiency constraints and data privacy concerns \citep{wang2024comprehensive, zhou2024survey, rolnick2019experience}, motivating a replay-free mechanism for constraining prior-task behavioral drift.

To constrain prior-task KL without accessing $\mathcal{T}_{old}$, a classical approach is to approximate it in parameter space. EWC \citep{kirkpatrick2017overcoming} follows the idea by using a second-order approximation with the empirical Fisher to identify and regularize important parameters. However, estimating the Fisher requires additional gradient computation over replayed data and incurs substantial memory overhead, making it impractical for large-scale post-training. Theoretically, we show that the oracle prior-task KL can instead be locally relaxed to an empirical parameter movement measure accumulated during training. Based on this insight, we propose Continual Policy Optimization (CPO), which maintains a cumulative set of large movement parameters and applies masked sparse movement regularization to the protected subset during sequential policy optimization.

In summary, our key contributions are as follows: 
\begin{itemize} 

\item We introduce MRCL, a new continual learning benchmark built from diverse multimodal datasets released in 2025 or later. MRCL reduces the risk of pre-training data contamination and provides a  reliable testbed for evaluating forgetting in continual post-training.

\item We propose CPO, a replay-free continual RL framework grounded in the prior-task behavioral KL objective. CPO theoretically relaxes the intractable historical KL constraint into a parameter-movement surrogate to preserve prior-task behavior without storing old data. 
\item Extensive experiments across multiple VLM scales demonstrate that CPO substantially reduces forgetting. On Qwen3-VL-8B, CPO reduces forgetting by $13.7\%$ and improves pretrained model capability by $7.0\%$.
\end{itemize}

\section{Multimodal Reasoning Continual Learning Benchmark}
\label{sec:benchmark}

\begin{figure}[!ht]
    \centering
    \includegraphics[width=1\linewidth]{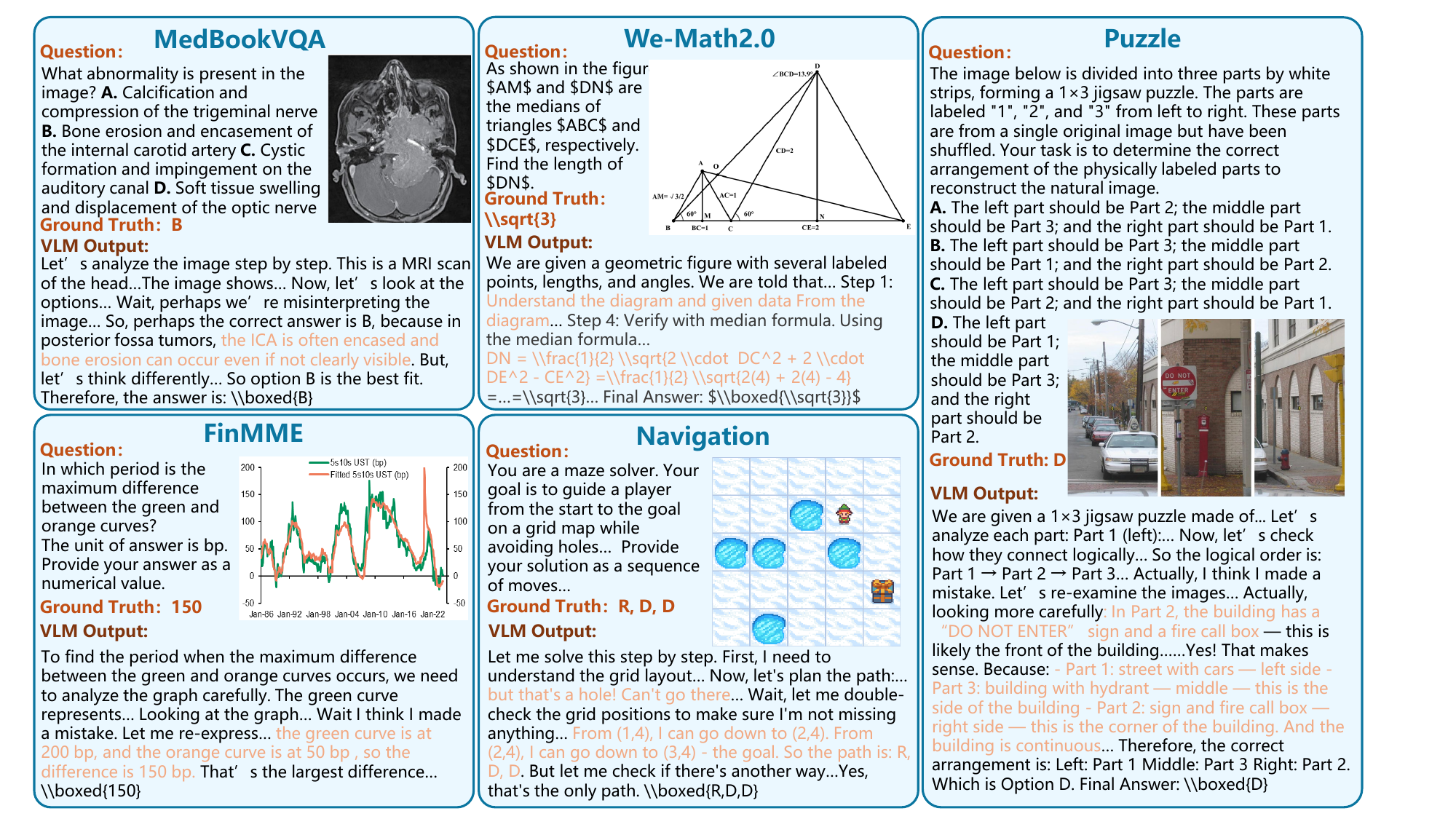}
    \caption{Overview of the MRCL benchmark. MRCL combines recently released multimodal reasoning tasks  to evaluate both the plasticity and stability of VLMs during continual post-training.}
    \label{fig:benchmark}
    
\end{figure}

To rigorously evaluate continual reinforcement learning while reducing the interference of pre-training data contamination, we construct a new benchmark comprising datasets released in 2025 or later. The benchmark is selected to remain challenging for state-of-the-art VLMs and to cover diverse forms of multimodal reasoning, including medical understanding, navigation, geometry, visual-spatial reasoning, and financial chart analysis. Fig.~\ref{fig:benchmark} presents the query and answer formats, and Tab.~\ref{tab:task_statistics} summarizes the qualitative statistics of each task. The specific data construction pipelines are detailed as follows:

\textbf{MedBookVQA.} This task evaluates medical visual understanding in difficult clinical scenarios. We select the Symptom Recognition and Surgery Operation subsets of \citet{yip2025medbookvqa}. After filtering out a small number of samples with parsing failures, we randomly split the remaining data into training and testing sets using a 7:3 ratio.

\textbf{Navigation.} The Navigation task is derived from the ThinkMorph \citep{gu2026thinkmorph} dataset. We implement a strict manual filtering process to ensure that each sample has a strictly unique shortest path, making the evaluation unambiguous. We then divide the data by the required number of navigation steps: samples requiring 2 to 10 steps are split into training and testing sets at a 7:3 ratio, and all samples requiring 11 to 15 steps are assigned to the test set.

\textbf{We-Math2.0.} This task evaluates complex multimodal mathematical reasoning and is processed into two configurations. For We-Math2.0-standard \citep{qiao2026wemath}, we extract problems from Plane Geometry, Analytic Geometry, and Solid Geometry. To ensure broad coverage, we sample one question per fundamental principle attribute to construct the test set, leaving the rest for training. For We-Math2.0-pro \citep{qiao2026wemath}, we separate the dataset based on difficulty attributes. Problems with foundational difficulties (base x, y, z) are designated as the training set, and those with extended combinatorial difficulties (xy, xz, yz, xyz) form the test set. Additionally, we conduct a comprehensive calibration of the ground-truth labels so that mathematical expressions strictly conform to standard LaTeX formatting.

\textbf{Puzzle.} This task is designed to test visual-spatial reasoning sourced from the ThinkMorph \citep{gu2026thinkmorph} datasets. We use the Jigsaw\_Assembly subset entirely for training and the VisPuzzle subset exclusively for testing.

\textbf{FinMME.} The financial chart understanding task requires precise visual extraction and numerical reasoning \citep{luo2025finmme}. We clean the dataset by removing images that fail to parse, with half of the questions being multi-answer choice where more than one option may be correct and the other half being numerical questions. Recognizing the inherent visual discrepancies when models estimate values directly from charts, we introduce a reasonable tolerance threshold for the numerical ground truths to ensure fair evaluation.

\begin{table*}[!t]
\centering
\caption{Basic statistics and answer formats of the evaluated tasks.}
\label{tab:task_statistics}
\begin{tabular}{wl{6em} | wc{4em} | wc{3em} | wc{3em} | wc{15em}}
\toprule
\multirow{2}{*}{Task} & \multirow{2}{*}{\shortstack{Release\\Date}} & \multirow{2}{*}{\# Train} & \multirow{2}{*}{\# Test} & \multirow{2}{*}{Answer Format} \\
 & & & & \\ %
\midrule
MedBookVQA    & 2025-06 & 1.4k & 0.6k & Multiple Choice \\
Navigation & 2025-10 &  1.5k &   0.5k & Direction Sequence \\
We-Math2.0   & 2025-08 &  5.7k & 3.7k & Numerical \& Multiple Choice \\
Puzzle     & 2025-10 &  6.0k &   0.4k & Multiple Choice \\
FinMME     & 2025-05 & 8.0k & 1.0k & Numerical \& Multi-Answer Choice \\
\bottomrule
\end{tabular}
\end{table*}

\section{Method}
\label{sec:method}

This section formalizes continual RL for VLM post-training and shows that prior-task performance degradation is bounded by behavioral KL on historical queries and old-policy prefixes. Because the behavioral KL is unavailable without replay, we relax it into a parameter-space surrogate and propose a simple but effective approach named CPO.

\subsection{Continual RL Objective and Forgetting Penalty}
\label{subsec:method_setting}

We formulate continual reinforcement learning on a sequence of tasks denoted by $\{\mathcal{T}_1, \dots, \mathcal{T}_T\}$. Each task $\mathcal{T}_i$ is associated with an input distribution $\mathcal{D}_i$, where a sample $x \sim \mathcal{D}_i$ represents a multimodal query, typically comprising an image and a textual instruction. Conditioned on $x$, a policy $\pi$ autoregressively generates a response sequence $y_{1:H}$:
\begin{equation}
P_\pi(y_{1:H} \mid x) = \prod_{t=1}^H \pi(y_t \mid x, y_{<t}),
\end{equation}
where $H$ denotes the maximum generation horizon, and $t$ indexes the decoding step. Variable-length outputs are padded or terminated by an EOS token. Upon completion of the sequence, the environment evaluates the response quality using a terminal scalar reward $R_i(x, y_{1:H})$. The expected objective for task $\mathcal{T}_i$ under policy $\pi$ is defined as
$\eta_i(\pi) = \mathbb{E}_{x \sim \mathcal{D}_i, \, y \sim P_\pi(\cdot \mid x)}[R_i(x, y)]$.

During sequential adaptation to the $i$-th task, the learning process can be formulated as a dual-objective optimization problem: improving plasticity on new data while maintaining stability across prior tasks. Let $\pi^{(i)}$ denote the target policy to be optimized on task $\mathcal{T}_i$, the oracle continual RL objective is
\begin{equation}
\max_{\pi^{(i)}} \left[ \eta_i(\pi^{(i)}) - \lambda \sum_{k=1}^{i-1} \left( \eta_k(\pi^{(k)}) - \eta_k(\pi^{(i)}) \right) \right],
\label{eq:joint_objective}
\end{equation}
where $\lambda > 0$ indicates the critical stability-plasticity trade-off. To optimize the joint objective, we explicitly constrain the performance degradation on previous tasks, represented by the penalty term $\eta_k(\pi^{(k)}) - \eta_k(\pi^{(i)})$. In the following, we establish a theoretical surrogate for the behavioral drift.

\subsection{Prior-Task Behavioral KL Bounds Forgetting}
\label{subsec:ideal_kl}

To quantify the behavioral drift that causes catastrophic forgetting on a prior task $\mathcal{T}_{i-1}$, we analyze token-level KL divergence under the prefix visitation distribution of the old policy. We first formulate the oracle objective assuming previous-task data is accessible. Formally, let $\tilde{\pi}$ denote the old policy preserved after training on $\mathcal{T}_{i-1}$ and let $\pi$ denote the current policy optimized on $\mathcal{T}_{i}$, the autoregressive factorization induces $P_{\tilde{\pi}}(\cdot\mid x)$ and $P_{\pi}(\cdot\mid x)$, and the old-task performance is given by $\eta_{i-1}(\tilde{\pi})$.
Assuming the terminal reward function is bounded, its maximum variation range is defined as
$B_{i-1}=\sup_{x,y,y'}|R_{i-1}(x,y)-R_{i-1}(x,y')|$,
where $y$ and $y'$ represent any two possible response sequences generated for the same input query $x$.

To capture the distribution of generated prefixes at each step during the autoregressive generation, we define the normalized prefix visitation distribution under the old policy:
\begin{equation}
	d_{\tilde{\pi}}^{\mathcal{T}_{i-1}}(t,x,y_{<t})=\frac{1}{H}\mathcal{D}_{i-1}(x)P_{\tilde{\pi}}(y_{<t}\mid x),\qquad t\in\{1,\ldots,H\}.
\end{equation}
The mean token-level KL divergence over this visitation distribution characterizes the behavioral discrepancy between the two policies:
\begin{equation}
    \label{eq:oraclekl}
	\bar{D}_{\mathrm{KL}}^{d_{\tilde{\pi}}^{\mathcal{T}_{i-1}}}(\tilde{\pi},\pi)=
	\mathbb{E}_{(t,x,y_{<t})\sim d_{\tilde{\pi}}^{\textcolor{blue}{\mathcal{T}_{i-1}}}}
	\left[
		D_{\mathrm{KL}}\bigl(\tilde{\pi}(\cdot\mid x,y_{<t})\,\|\,\pi(\cdot\mid x,y_{<t})\bigr)
	\right].
\end{equation}
This formulation differs from the KL regularization typically used in standard reinforcement learning algorithms such as GRPO, which is computed only over inputs from the current task $\textcolor{red}{\mathcal{T}_i}$. Current-task regularization can prevent forgetting only when the new data distribution is sufficiently close to the old data distribution. The KL divergence in Eq.~\ref{eq:oraclekl} instead provides a theoretical guarantee by bounding performance degradation on prior tasks, as shown in the following proposition.
\begin{proposition}[Finite horizon KL forgetting bound]
	Given a bounded reward range $B_{i-1}<\infty$, the expected return degradation on task $\mathcal{T}_{i-1}$ is bounded by the token level KL divergence:
	\begin{equation}
		\left|\eta_{i-1}(\tilde{\pi})-\eta_{i-1}(\pi)\right|
		\leq
		B_{i-1}\sqrt{\frac{H}{2}\bar{D}_{\mathrm{KL}}^{d_{\tilde{\pi}}^{\mathcal{T}_{i-1}}}(\tilde{\pi},\pi)}.
		\label{eq:forgetting_bound}
	\end{equation}
\end{proposition}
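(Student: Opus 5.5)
The plan is to pass from the return gap to a total variation distance between sequence distributions, then apply Pinsker, and finally use the chain rule for KL to convert sequence-level KL into the averaged token-level KL under the old policy's prefix visitation distribution.

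\textbf{Step 1 (return gap to total variation).} Fix a query $x$. Let $c_x$ be any constant, for instance the midpoint of the range of $R_{i-1}(x,\cdot)$, so that $|R_{i-1}(x,y)-c_x|\le B_{i-1}/2$ for every response $y$. Since both $P_{\tilde\pi}(\cdot\mid x)$ and $P_\pi(\cdot\mid x)$ are probability distributions, subtracting $c_x$ does not change the difference of expectations. This gives
\begin{equation*}
\bigl|\mathbb{E}_{P_{\tilde\pi}(\cdot\mid x)}R_{i-1}-\mathbb{E}_{P_{\pi}(\cdot\mid x)}R_{i-1}\bigr|\le B_{i-1}\,\mathrm{TV}\bigl(P_{\tilde\pi}(\cdot\mid x),P_\pi(\cdot\mid x)\bigr),
\end{equation*}
with $\mathrm{TV}=\tfrac12\|\cdot\|_1$. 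Averaging over $x\sim\mathcal{D}_{i-1}$ and moving the absolute value inside the expectation bounds $|\eta_{i-1}(\tilde\pi)-\eta_{i-1}(\pi)|$ by $B_{i-1}\,\mathbb{E}_x[\mathrm{TV}_x]$.

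\textbf{Step 2 (Pinsker and Jensen).} Pinsker's inequality gives $\mathrm{TV}_x\le\sqrt{\tfrac12 D_{\mathrm{KL}}(P_{\tilde\pi}(\cdot\mid x)\|P_\pi(\cdot\mid x))}$. Since the square root is concave, Jensen's inequality lets us move the expectation over $x$ inside the root. The result is $|\Delta\eta|\le B_{i-1}\sqrt{\tfrac12\mathbb{E}_x D_{\mathrm{KL}}(P_{\tilde\pi}\|P_\pi)}$.

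\textbf{Step 3 (chain rule).} The autoregressive factorization and the chain rule for KL give, for each $x$,
\begin{equation*}
D_{\mathrm{KL}}(P_{\tilde\pi}(\cdot\mid x)\|P_\pi(\cdot\mid x))=\sum_{t=1}^H\mathbb{E}_{y_{<t}\sim P_{\tilde\pi}(\cdot\mid x)}\bigl[D_{\mathrm{KL}}(\tilde\pi(\cdot\mid x,y_{<t})\|\pi(\cdot\mid x,y_{<t}))\bigr].
\end{equation*}
Here the prefix expectation is taken under the \emph{old} policy because the KL is evaluated with $P_{\tilde\pi}$ as the first argument. Averaging over $x\sim\mathcal{D}_{i-1}$ and comparing with the definition of $d_{\tilde\pi}^{\mathcal{T}_{i-1}}$, which carries the factor $1/H$, shows that this sum equals $H\,\bar D_{\mathrm{KL}}^{d_{\tilde\pi}^{\mathcal{T}_{i-1}}}(\tilde\pi,\pi)$. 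Substituting into Step 2 yields Eq.~\ref{eq:forgetting_bound}.

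\textbf{Main obstacle.} The substantive step is Step 3: I must verify the chain-rule identity carefully for variable-length outputs. After EOS, every continuation is a deterministic padding, so the per-step KL at those prefixes is zero for both policies, provided they share the same padding convention. With that convention the identity holds exactly and the factor $H$ matches the normalization of $d$. The remaining technical conditions are absolute continuity, without which the bound holds trivially as $+\infty$, and the finiteness of the response space or an appropriate measurability assumption. Steps 1 and 2 are standard.
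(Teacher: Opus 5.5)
Your proposal is correct and follows the paper's proof essentially step for step: bound the per-prompt return gap by $B_{i-1}$ times total variation, apply Pinsker and then Jensen over $x\sim\mathcal{D}_{i-1}$, and identify the expected sequence-level KL with $H\,\bar{D}_{\mathrm{KL}}^{d_{\tilde{\pi}}^{\mathcal{T}_{i-1}}}(\tilde{\pi},\pi)$ via the autoregressive chain rule under old-policy prefixes. The only cosmetic difference is that you center the reward at the midpoint of its range, whereas the paper shifts by the infimum, rescales to $[0,1]$, and uses the variational form of total variation (handling $B_{i-1}=0$ separately); both routes give the same constant.
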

The bound is established by relating expected return degradation to total variation distance, which is then controlled via Pinsker's inequality~\citep{cover2012elements}. We use Jensen's inequality and the autoregressive KL chain rule to decompose sequence-level divergence into mean token-level KL. The detailed proof is provided in Appendix~\ref{app:kl_forgetting_bound}.

Eq.~\ref{eq:forgetting_bound} shows that preventing catastrophic forgetting on $\mathcal{T}_{i-1}$ requires the current policy $\pi$ to stay close to the old policy $\tilde{\pi}$ under the historical prefix distribution. However, directly optimizing the joint objective in Eq.~\ref{eq:joint_objective} or the KL constraint in Eq.~\ref{eq:oraclekl} requires resampling from historical distributions $\mathcal{D}_{i-1}$ and repeatedly forwarding the old model $\tilde{\pi}$. For large-scale VLM post-training, these requirements are infeasible due to computational overhead and data privacy constraints.

\subsection{Fisher Relaxation of Prior-Task Behavioral KL}
\label{subsec:fisher_relaxation}

To eliminate the dependence of the oracle KL divergence on historical data, we derive a replay-free surrogate. We denote the old and active policies as $\pi_{\tilde{\theta}} \equiv  \tilde{\pi}$ and $\pi_{\theta} \equiv \pi$, where $\tilde{\theta}$ represents the frozen weights from task $\mathcal{T}_{i-1}$ and $\theta$ denotes the active weights optimized on $\mathcal{T}_i$. Since the KL divergence reaches its minimum when the two distributions are identical, we have $K_{i-1}(\tilde{\theta})=0$. The score identity implies that the first-order derivative vanishes $\nabla_\theta K_{i-1}(\theta)|_{\theta=\tilde{\theta}}=0$. Thus, defining the oracle old-task KL and expanding it around $\tilde{\theta}$ gives:
\begin{equation}
	\begin{aligned}
	K_{i-1}(\theta)
	&\coloneqq
	\bar{D}_{\mathrm{KL}}^{d_{\tilde{\pi}}^{\mathcal{T}_{i-1}}}(\pi_{\tilde{\theta}},\pi_\theta) \\
	&=
	\frac{1}{2}(\theta-\tilde{\theta})^\top H_{\tilde{\theta}}^{\mathcal{T}_{i-1}}(\theta-\tilde{\theta})
	+
	R_{i-1}^{(3)}(\theta),
	\end{aligned}
	\label{eq:taylor_with_remainder}
\end{equation}
where $H_{\tilde{\theta}}^{\mathcal{T}_{i-1}}=\nabla_\theta^2K_{i-1}(\theta)|_{\theta=\tilde{\theta}}$ denotes the Hessian matrix, and $R_{i-1}^{(3)}(\theta)$ represents the third-order Taylor remainder. For the KL divergence, the Hessian is mathematically equivalent to the Fisher Information Matrix (FIM) of the old task:
\begin{equation}
	F_{\tilde{\theta}}^{\mathcal{T}_{i-1}}
	=
	\mathbb{E}_{(t,x,y_{<t})\sim d_{\tilde{\pi}}^{\mathcal{T}_{i-1}}}
	\mathbb{E}_{a\sim\tilde{\pi}(\cdot\mid x,y_{<t})}
	\left[
		\nabla_\theta\log\pi_\theta(a\mid x,y_{<t})
		\nabla_\theta\log\pi_\theta(a\mid x,y_{<t})^\top
	\right]_{\theta=\tilde{\theta}}.
	\label{eq:fisher_definition}
\end{equation}
Thus, $H_{\tilde{\theta}}^{\mathcal{T}_{i-1}}=F_{\tilde{\theta}}^{\mathcal{T}_{i-1}}$. Omitting the higher-order remainder term defines a second-order local surrogate objective:
\begin{equation}
	Q_{i-1}(\theta;\tilde{\theta})
	=
	\frac{1}{2}(\theta-\tilde{\theta})^\top F_{\tilde{\theta}}^{\mathcal{T}_{i-1}}(\theta-\tilde{\theta}).
	\label{eq:fisher_quadratic}
\end{equation}
To reduce the prohibitive computational and memory costs associated with the dense FIM in large scale models, we apply a diagonal relaxation. Replacing the dense matrix $F_{\tilde{\theta}}^{\mathcal{T}_{i-1}}$ with its diagonal components yields:
\begin{equation}
	Q_{i-1}^{\mathrm{diag}}(\theta;\tilde{\theta})
	=
	\frac{1}{2}\sum_{k=1}^{d}F_{kk}^{\mathcal{T}_{i-1}}(\theta_k-\tilde{\theta}_k)^2,
	\label{eq:diagonal_fisher}
\end{equation}
where the $k$-th diagonal entry is computed as:
\begin{equation}
    \label{eq:fkk}
	F_{kk}^{\mathcal{T}_{i-1}}
	=
	\mathbb{E}_{(t,x,y_{<t})\sim d_{\tilde{\pi}}^{\mathcal{T}_{i-1}}}
	\mathbb{E}_{a\sim\tilde{\pi}(\cdot\mid x,y_{<t})}
	\left[
		\left(\frac{\partial\log\pi_\theta(a\mid x,y_{<t})}{\partial\theta_k}\right)^2
	\right]_{\theta=\tilde{\theta}}.
\end{equation}
Unlike the supervised empirical Fisher in EWC, which is computed from fixed input-label pairs \citep{kirkpatrick2017overcoming}, the RL Fisher here is an old-policy token-level Fisher averaged over autoregressive prefix visitation. It therefore corresponds directly to the old-task behavioral KL.

Eq.~\ref{eq:diagonal_fisher} and Eq.~\ref{eq:fkk} show that the oracle KL constraint can be locally approximated as movement regularization, weighted by the sensitivity of the old policy distribution to each parameter. However, directly estimating $F_{kk}^{\mathcal{T}_{i-1}}$ still requires sampling from the old prefix distribution and computing additional gradients. This gradient requirement conflicts with the desired efficiency of lightweight continual reinforcement learning, motivating a gradient-free approximation.

\subsection{Parameter Movement as a Gradient-Free Fisher Proxy}
\label{subsec:movement_proxy}

To avoid the computational cost of explicit Fisher estimation, we use empirical parameter movement during task training as a proxy for parameter importance. Let $\theta^{(i-2)}$ and $\tilde{\theta}$ denote the model checkpoints after tasks $\mathcal{T}_{i-2}$ and $\mathcal{T}_{i-1}$, respectively. The movement score for the $k$-th coordinate on task $\mathcal{T}_{i-1}$ is
\begin{equation}
	s_k^{(i-1)}=\left|\tilde{\theta}_k-\theta_k^{(i-2)}\right|.
	\label{eq:movement_score}
\end{equation}
The proxy is theoretically motivated by the relationship between policy gradient updates and Fisher sensitivity. Let $h_t=(x,y_{<t})$ denote the generation history and let $\phi_k(t,h_t,a)$ be the token-level score evaluated on the old policy:
\begin{equation}
	\phi_k(t,h_t,a)=\frac{\partial}{\partial\theta_k}\log\pi_\theta(a\mid h_t)\Big|_{\theta=\tilde{\theta}}.
\end{equation}
Specifically, the diagonal elements of the old task FIM correspond to the expectation of the squared score function:
\begin{equation}
	F_{kk}^{\mathcal{T}_{i-1}}=\mathbb{E}_{(t,h_t)\sim d_{\tilde{\pi}}^{\mathcal{T}_{i-1}}}
	\left[
		\mathbb{E}_{a\sim\tilde{\pi}(\cdot\mid h_t)}[\phi_k(t,h_t,a)^2]
	\right].
\end{equation}
Since policy gradient updates are proportional to reward-weighted scores, parameters with larger Fisher sensitivity tend to exhibit larger accumulated movement during training.
\begin{proposition}[Parameter movement as a noisy Fisher proxy]
\label{subsec:proposition2}
    Under locally stationary SGD with learning rate $\eta$ for $N$ stochastic updates, the expected squared movement satisfies:
	\begin{equation}
		\mathbb{E}\left[\left|\tilde{\theta}_k-\theta_k^{(i-2)}\right|^2\right]
		=
        \eta^2\sigma_A^2 C_k F_{kk}^{\mathcal{T}_{i-1}}.
        \label{eq:movement_fisher_relation}
    \end{equation}
    where $\sigma_A^2$ is the second moment of the advantage and $C_k$ is a coordinate coefficient reflecting both reward-score alignment and stochastic-gradient variance. Consequently, ranking parameters by $s_k^{(i-1)}$ is consistent with ranking by $F_{kk}^{\mathcal{T}_{i-1}}$ when $C_k$ is approximately uniform across coordinates. More generally, Appendix~\ref{app:movement_fisher_proxy} gives the pairwise condition under which the Fisher ranking is preserved.
\end{proposition}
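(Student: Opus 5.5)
The plan is to write the accumulated movement on task $\mathcal{T}_{i-1}$ as a sum of $N$ stochastic policy-gradient steps and compute its second moment. Write the REINFORCE/GRPO-style update as $\theta^{(n+1)}=\theta^{(n)}+\eta\,\hat g^{(n)}$ with per-step estimator $\hat g^{(n)}_k = A^{(n)}\,\phi_k(t^{(n)},h^{(n)}_t,a^{(n)})$. Here $(t,h_t)\sim d_{\tilde\pi}^{\mathcal{T}_{i-1}}$, $a\sim\tilde\pi(\cdot\mid h_t)$, and $A$ is the (normalized) advantage. The \emph{locally stationary} assumption fixes the sampling distribution and the scores at $\tilde\theta$ throughout the $N$ steps, so that the $\hat g^{(n)}$ are identically distributed. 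Summing gives $\tilde\theta_k-\theta_k^{(i-2)}=\eta\sum_{n=1}^N \hat g_k^{(n)}$, and therefore
\begin{equation*}
\mathbb{E}\bigl[(\tilde\theta_k-\theta_k^{(i-2)})^2\bigr]=\eta^2\Bigl(\sum_{n}\mathbb{E}[(\hat g_k^{(n)})^2]+\sum_{n\neq m}\mathbb{E}[\hat g_k^{(n)}\hat g_k^{(m)}]\Bigr).
\end{equation*}

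The next step is to express both pieces through $F_{kk}^{\mathcal{T}_{i-1}}$. For the diagonal terms, factor $\mathbb{E}[A^2\phi_k^2]=\sigma_A^2\,F_{kk}^{\mathcal{T}_{i-1}}\,\rho_k$, where $\rho_k:=\mathbb{E}[A^2\phi_k^2]/(\mathbb{E}[A^2]\,\mathbb{E}[\phi_k^2])$ measures the alignment between advantage magnitude and score magnitude. It equals $1$ if $A$ is independent of the score. For the cross terms, independence across steps under stationarity gives $\mathbb{E}[\hat g_k^{(n)}\hat g_k^{(m)}]=(\mathbb{E}[A\phi_k])^2$. Write $\mathbb{E}[A\phi_k]=\kappa_k\,\sigma_A\sqrt{F_{kk}^{\mathcal{T}_{i-1}}}$, where $\kappa_k\in[-1,1]$ is the reward--score correlation coefficient; Cauchy--Schwarz guarantees $|\kappa_k|\le 1$. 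Collecting terms yields the identity with
\begin{equation*}
C_k = N\rho_k + N(N-1)\kappa_k^2 ,
\end{equation*}
which is exactly a coefficient combining stochastic-gradient variance (the $N\rho_k$ part) with reward--score alignment (the drift part). Since the statement leaves $C_k$ as a defined coordinate coefficient, the equality then holds by construction, and the substantive content is the explicit form of $C_k$.

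For the ranking consequence, I would argue that when $C_k\equiv C$ the expected squared movement is a strictly increasing function of $F_{kk}$. Ordering by $\mathbb{E}[s_k^2]$ therefore coincides with ordering by $F_{kk}$. The pairwise statement follows: $\mathbb{E}[s_k^2]\ge \mathbb{E}[s_j^2]$ iff $C_kF_{kk}\ge C_jF_{jj}$, so the Fisher order is preserved whenever $F_{kk}/F_{jj}\ge C_j/C_k$. The word ``noisy'' reflects that this is a statement about expectations of $s_k^2$, not about realized $s_k$.

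The main obstacle is justifying the stationarity and factorization steps: treating $\phi_k$ as evaluated at $\tilde\theta$ (rather than along the trajectory from $\theta^{(i-2)}$), and the inter-step independence, which is violated by GRPO group normalization and by clipping. I would handle this by stating these as explicit modeling assumptions. Concretely, the gradients are evaluated at a fixed linearization point and the minibatches are i.i.d., with all residual dependence absorbed into $C_k$. Adaptive optimizers such as Adam would break the $\eta^2$ scaling, so I would restrict the claim to plain SGD as the statement does.
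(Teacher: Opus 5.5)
Your proposal is correct and follows essentially the same route as the paper: write the $N$-step SGD displacement as a drift term plus accumulated noise, then normalize the second moment by $\sigma_A^2F_{kk}^{\mathcal{T}_{i-1}}$ and read off the pairwise ranking condition $C_\ell/C_k<F_{kk}^{\mathcal{T}_{i-1}}/F_{\ell\ell}^{\mathcal{T}_{i-1}}$. The paper writes the displacement as $N\mu_k+M_{N,k}$ with martingale-difference noise, a slightly weaker assumption than your i.i.d.\ steps, and your coefficient $N\rho_k+N(N-1)\kappa_k^2$ is exactly the paper's $C_k=N^2\rho_k^2+N(q_k-\rho_k^2)$ after renaming (your $\rho_k$ and $\kappa_k$ are the paper's $q_k$ and $\rho_k$, respectively).
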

We defer the rigorous proof and the explicit formulation of $C_k$ to Appendix~\ref{app:movement_fisher_proxy}. Proposition~\ref{subsec:proposition2} establishes empirical parameter movement as a noisy ranking proxy for Fisher importance, enabling gradient-free parameter selection. The proxy preserves ordinal rank information rather than cardinal Fisher magnitudes. Consequently, the top-movement mask identifies which parameters are likely important without quantifying their relative importance. This ranking motivates the regularization described next.

\subsection{Cumulative Masked Movement Regularization}
\label{subsec:cumulative_mask}

Continual policy optimization maintains a cumulative protected set $\mathcal{S}^{(i-1)}$ before training task $\mathcal{T}_i$, initialized as $\mathcal{S}^{(0)}=\varnothing$. Let $\tilde{\theta}\equiv\theta^{(i-1)}$ be the previous checkpoint and let $m^{(i-1)}$ be the binary mask of $\mathcal{S}^{(i-1)}$. We optimize the current policy with RL loss $\mathcal{L}^{\text{RL}}_i(\theta)$ and masked $L_1$ movement regularization:
\begin{equation}
    \mathcal{L}_i(\theta) = \mathcal{L}^{\text{RL}}_i(\theta) + \frac{\lambda}{\|m^{(i-1)}\|_0} \left\| m^{(i-1)} \odot (\theta - \tilde{\theta}) \right\|_1.
    \label{eq:l1_loss_vector}
\end{equation}
When $\mathcal{S}^{(i-1)}$ is empty, the regularization term is omitted.
We employ $L_1$ regularization because the movement proxy provides ranking information rather than calibrated Fisher magnitudes. While the Fisher regularizer in Eq.~\ref{eq:diagonal_fisher} assigns a continuous weight to each parameter, the movement proxy yields a binary support. An unweighted masked $L_2$ regularizer treats all protected parameters as equally important and can still allow dense parameter drift. Masked $L_1$ regularization promotes sparse movement, encouraging most selected parameters to remain close to the previous checkpoint.
Upon completing task $\mathcal{T}_i$, we update the protected set by adding the top $p\%$ moving coordinates:
\begin{equation}
	\mathcal{S}^{(i)} = \mathcal{S}^{(i-1)} \cup \operatorname{TopP}_{p}\left(\left\{|\theta_k^{(i)} - \tilde{\theta}_k|\right\}_{k=1}^{d}\right).
	\label{eq:mask_update}
\end{equation}
We summarize the overall training procedure in Algorithm~\ref{alg:cumulative_mask}.

\begin{algorithm}[t]
	\caption{Continual Policy Optimization}
	\label{alg:cumulative_mask}
	\begin{algorithmic}[1]
		\REQUIRE Task sequence $\{\mathcal{T}_i\}_{i=1}^{T}$, top ratio $p$, regularization strength $\lambda$
		\STATE Initialize protected set $\mathcal{S}^{(0)} \leftarrow \varnothing$ and checkpoint $\theta^{(0)}$
		\FOR{$i=1$ to $T$}
		\STATE Initialize $\tilde{\theta} \leftarrow \theta^{(i-1)}$
		\STATE Train on $\mathcal{T}_i$ with Eq.~\ref{eq:l1_loss_vector} using protected set $\mathcal{S}^{(i-1)}$
		\STATE Save the task-end checkpoint $\theta^{(i)}$
		\STATE Add the top $p\%$  to obtain $\mathcal{S}^{(i)}$ with Eq.~\ref{eq:mask_update}
		\ENDFOR
	\end{algorithmic}
\end{algorithm}

\section{Experiments}
\label{sec:experiments}

\subsection{Experimental Setup}

\textbf{Evaluation Metrics.} We adopt distinct scoring rules tailored to the answer formats of different tasks. For \textit{MedBookVQA} and \textit{Puzzle}, a prediction receives a score of 1 for an exact match and 0 otherwise. For \textit{Navigation}, a prediction receives a score of 1 only if the entire generated trajectory perfectly matches the ground truth shortest path. For \textit{We-Math2.0}, answers containing multiple blanks separated by semicolons receive partial credit, and each correct blank contributes $1/k$ to the score where $k$ is the total number of blanks. For \textit{FinMME}, multiple-answer choice questions receive a score of 0.5 for incomplete correct selections and 0 if any incorrect option is chosen. For numerical questions in \textit{FinMME}, a prediction receives a score of 1 if it falls within a predefined tolerance threshold. We report three aggregated accuracy metrics to evaluate the continual learning process. Let $T$ denote the total number of tasks and $a_{i,j}$ represent the accuracy on task $j$ evaluated after the model finishes training on task $i$. Mean Finetune Accuracy (MFT) measures the average performance on each task when evaluated immediately after training, aiming to quantify the ability to learn new tasks. Mean Final Accuracy (MFN) measures the average accuracy across all tasks measured at the end of the entire training sequence, reflecting the overall knowledge retained and the severity of forgetting. Mean Task Accuracy (MTA) provides a holistic measure of performance stability by averaging the accuracy on each task throughout the entire learning process.
\begin{equation}
    \text{MFT} = \frac{1}{T} \sum_{i=1}^T a_{i,i}, \quad \text{MFN} = \frac{1}{T} \sum_{j=1}^T a_{T,j}, \quad \text{MTA} = \frac{1}{T} \sum_{j=1}^T \left( \frac{1}{T-j+1} \sum_{i=j}^T a_{i,j} \right).
\end{equation}

\textbf{Implementation Details.} We implement all baseline methods and our proposed approach within the \texttt{TRL} framework, carefully referencing their official source codes. We conduct experiments across three model scales: Qwen3-VL-2B, Qwen3-VL-4B, and Qwen3-VL-8B \citep{bai2025qwen3}. For LoRA-based methods, we set the rank to $r=128$ to ensure sufficient representation capacity. To evaluate standard fine-tuning without relying on intermediate reasoning steps generated by larger models, we employ a System 1 prompting strategy with baseline $\text{Zero-shot}^1$, formatting the data strictly as direct question-and-answer pairs. For all RL methods, the generation phase during training utilizes 8 rollouts per prompt, a sampling temperature of $0.8$, and a maximum completion length of $2048$.   
Specifically, for GRPO \citep{shao2024deepseekmath}, we adopt recent algorithmic improvements DAPO \citep{yu2025dapo}, setting the clipping thresholds to $\epsilon_{\text{low}} = 0.2$ and $\epsilon_{\text{high}} = 0.27$ with token-level policy gradient loss. When training GRPO with the KL divergence constraint (denoted as GRPO*), we apply a KL penalty coefficient of $\beta=0.04$. For GSPO \citep{zheng2025group}, the clipping parameters are set to $\epsilon_{\text{low}} = 3 \times 10^{-4}$ and $\epsilon_{\text{high}} = 4 \times 10^{-4}$. CPO builds on GSPO, applying sparse regularization $\lambda = 100$ to the top $p = 10\%$ coordinates. During inference, we apply temperature of $0.0$ and repetition penalty of $1.05$ with baseline $\text{Zero-shot}^2$. The maximum generation length is capped at $2048$ for the five in-domain tasks, and extended to $16384$ when evaluating other multidimensional capability benchmarks. All experiments are conducted on NVIDIA H100 GPUs. Specifically, 2B and 4B LoRA experiments run on a single GPU; 2B, 4B Full SFT and 8B LoRA experiments require 2 GPUs; Full SFT for 8B model are conducted on 4 GPUs; all RL experiments are conducted on 8 GPUs.

\textbf{Reward Design.} The reward consists of an accuracy reward and a length penalty. The accuracy reward aligns exactly with the aforementioned evaluation scoring rules for most tasks, except for \textit{Navigation}, where we provide dense stepwise feedback by awarding $1/k$ points for each correctly predicted step as $k$ is the total number of steps in the ground truth path. To prevent the policy from collapsing into degenerate short responses, we apply a length penalty of $-0.5$ if the generated output contains fewer than $20$ tokens.

\subsection{Continual Learning Performance on MRCL}

We evaluate all methods over a five-stage continual learning sequence, treating each dataset as one task. Tab.~\ref{tab:comparison_2B}, \ref{tab:comparison_4B}, and \ref{tab:comparison_8B} show a consistent pattern: CPO improves final retention while preserving strong task-level adaptation. The advantage becomes more pronounced as model scale increases. On Qwen3-VL-8B, CPO achieves 75.46\% MFN, improving over GSPO by 13.67 percentage points, while maintaining a high MFT of 74.80\%. This result indicates that the parameter-movement constraint scales effectively to larger VLMs. By contrast, SFT baselines exhibit substantial forgetting across all model sizes; full SFT on Qwen3-VL-8B drops to 43.99\% MFN. Standard RL methods generally retain more prior knowledge than SFT, but still undergo clear performance decay over extended task sequences. Adding the standard current-task KL penalty, as in GRPO*, restricts adaptation to new tasks without reliably preserving older capabilities. CPO avoids this failure mode by constraining policy drift on protected parameters rather than penalizing current-task behavior. For image-intensive tasks such as MedBookVQA and Puzzle, all reasoning-based methods remain less stable, suggesting that MRCL also exposes limitations of text-centric reasoning in multimodal continual learning.

\begin{table*}[!ht]
\centering
\setlength{\tabcolsep}{2pt}
\caption{Comparison of our proposed method with baseline approaches on Qwen3-VL-2B.}
\label{tab:comparison_2B}
\begin{tabular}{l | *{5}{wc{3.6em}} | *{3}{wc{3em}}}
\toprule
\multirow{2}{*}{Method} & \multicolumn{5}{c|}{\textbf{Final Accuracy on Each Task (\%)}} & \multicolumn{3}{c}{\textbf{Aggregated Acc. (\%)}} \\
 & MedBook & Naviga. & We-Math2 & Puzzle & FinMME & MFT$\uparrow$ & MFN$\uparrow$ & MTA$\uparrow$ \\
\midrule
Zero-shot$^1$ & 52.63 & 0.00 & 17.23 & 35.75 & 32.75 & -- & -- & 27.67 \\
Zero-shot$^2$ & 47.20 &	1.65 &	40.40 &	22.00 &	31.45 & -- & -- & 28.54 \\
\midrule
Full SFT & 44.65 &	0.55 &	33.41 &	83.25 &	33.60 & 66.32 &	39.09 &	52.02   \\
LoRA \tiny{\citep{hu2022lora}} & 52.80 &	18.01 &	35.47 &	\underline{85.00} &	32.35 & 63.45 &	44.73 &	53.97  \\
O-LoRA \tiny{\citep{wang2023orthogonal}} & 69.61 &	12.50 &	33.49 &	\underline{85.00} &	34.40 & 61.75 &	47.00 &	55.42  \\
SEFE \tiny{\citep{chen2025sefe}} & \textbf{85.57} &	20.59 &	35.63 &	81.25 &	34.20 & 63.62 &	\underline{51.45} &	57.57  \\
KeepLoRA \tiny{\citep{luo2026keeplora}} & 63.50 &	\underline{35.29} &	33.95 &	\textbf{86.00}	& 31.05 & 63.84 &	49.96 &	57.21  \\
\midrule
GRPO* \tiny{\citep{shao2024deepseekmath}} & \underline{76.23} &	17.10 &	\textbf{51.24} &	61.50 &	\underline{42.15} & 53.72 &	49.64 &	51.59 \\
GRPO \tiny{\citep{shao2024deepseekmath}} & 62.48 &	28.86 &	48.11 &	66.25 &	41.10 & 60.86 &	49.36 &	55.53 \\
GSPO \tiny{\citep{zheng2025group}} & 69.27 &	11.95 &	45.20 &	75.75 &	41.05 & \underline{68.23} &	48.64 &  \underline{57.96}   \\
CPO (Ours) & 76.06 &	\textbf{43.38} &	\underline{49.03} &	78.00 &	\textbf{42.65} & \textbf{68.28} &	\textbf{57.82} 	& \textbf{62.28} \\
\bottomrule
\end{tabular}
\end{table*}

\begin{table*}[!ht]
\centering
\setlength{\tabcolsep}{2pt}
\caption{Comparison of our proposed method with baseline approaches on Qwen3-VL-4B.}
\label{tab:comparison_4B}
\begin{tabular}{l | *{5}{wc{3.6em}} | *{3}{wc{3em}}}
\toprule
\multirow{2}{*}{Method} & \multicolumn{5}{c|}{\textbf{Final Accuracy on Each Task (\%)}} & \multicolumn{3}{c}{\textbf{Aggregated Acc. (\%)}} \\
 & MedBook & Naviga. & We-Math2 & Puzzle & FinMME & MFT$\uparrow$ & MFN$\uparrow$ & MTA$\uparrow$ \\
\midrule
Zero-shot$^1$ & 56.88 & 3.31 & 34.60 & 36.00 & 41.60 & -- & -- & 34.48 \\
Zero-shot$^2$ & 49.75 &	20.77 &	38.27 &	24.75 &	38.95 & -- & -- & 34.50 \\
\midrule
Full SFT & 56.71 &	3.12 &	37.84 	& 84.50 & 	37.90 & 70.04 &	44.01 &	55.85   \\
LoRA \tiny{\citep{hu2022lora}} & 85.57 & 	25.37 &	43.81 &	85.75 &	40.20 & 68.81 &	56.14 &	63.02   \\
O-LoRA \tiny{\citep{wang2023orthogonal}} & 89.30 &	37.68 &	41.98 &	\underline{93.25} &	39.85  & 67.44 & 	60.41 &	64.20   \\
SEFE \tiny{\citep{chen2025sefe}} & \underline{94.74} &	55.15 &	43.48 	& 91.00 	& 40.30 & 68.50 &	64.93 &	66.70  \\
KeepLoRA \tiny{\citep{luo2026keeplora}} & \textbf{95.42} &	\underline{65.26} &	46.51 &	\textbf{93.75} &	37.90 & 69.90 &	\underline{67.77} &	\underline{68.57}  \\
\midrule
GRPO* \tiny{\citep{shao2024deepseekmath}} & 63.16 &	40.26 &	\underline{62.84} &	65.50 &	\underline{48.50} & 61.77 &	56.05 &	58.51 \\
GRPO \tiny{\citep{shao2024deepseekmath}} & 70.63 &	39.34 &	58.18 &	68.25 &	47.75 & 67.29 &	56.83 &	60.04  \\
GSPO \tiny{\citep{zheng2025group}} & 65.03 &	30.70 &	45.17 &	83.00 &	44.80 & \textbf{74.26} &	53.74 	& 63.94  \\
CPO (Ours) & 90.83 &	\textbf{76.84} &	\textbf{63.64} &	77.50 &	\textbf{49.00} & \underline{73.24} &	\textbf{71.56} &	\textbf{72.81}  \\
\bottomrule
\end{tabular}
\end{table*}

\begin{table*}[!ht]
\centering
\setlength{\tabcolsep}{2pt}
\caption{Comparison of our proposed method with baseline approaches on Qwen3-VL-8B.}
\label{tab:comparison_8B}
\begin{tabular}{l | *{5}{wc{3.6em}} | *{3}{wc{3em}}}
\toprule
\multirow{2}{*}{Method} & \multicolumn{5}{c|}{\textbf{Final Accuracy on Each Task (\%)}} & \multicolumn{3}{c}{\textbf{Aggregated Acc. (\%)}} \\
 & MedBook & Naviga. & We-Math2 & Puzzle & FinMME & MFT$\uparrow$ & MFN$\uparrow$ & MTA$\uparrow$ \\
\midrule
Zero-shot$^1$ & 62.48 &	0.00 &	35.00 &	36.50 &	47.60  & -- & -- &  36.32 \\
Zero-shot$^2$ & 51.27 &	25.55 &	34.61 &	25.25 &	39.65  & -- & -- & 35.27 \\
\midrule
Full SFT & 79.80 &	33.46 &	38.94 &	35.00 &	32.75 &	60.50 &	43.99 &	51.21 \\
LoRA \tiny{\citep{hu2022lora}} & 94.57 &	42.46 &	45.11 &	91.50 &	41.20 &	70.36 &	62.97 &	66.47 \\
O-LoRA \tiny{\citep{wang2023orthogonal}} & 98.64 &	35.11 &	44.36 &	86.50 &	42.65 &	67.15 &	61.45 &	64.11 \\
SEFE \tiny{\citep{chen2025sefe}} & \underline{99.49} &	53.12 &	45.17 &	\underline{92.25} &	44.05 &	70.79 &	66.82 &	68.26 \\
KeepLoRA \tiny{\citep{luo2026keeplora}} & \textbf{100.00} &	\underline{63.05} &	49.00 &	\textbf{94.25} &	40.95 &	73.90 &	\underline{69.45} &	\underline{70.52} \\
\midrule
GRPO \tiny{\citep{shao2024deepseekmath}} & 73.51 &	6.62 &	\underline{55.43} &	72.25 &	45.80 &	64.99 &	50.72 &	57.06 \\
GSPO \tiny{\citep{zheng2025group}} & 78.61 &	44.67 &	53.45 &	85.25 &	\underline{46.95} &	\textbf{76.48} &	61.79 &	69.24   \\
CPO (Ours) & 96.94 &	\textbf{90.99} &	\textbf{64.87} &	74.00 &	\textbf{50.50} &	\underline{74.80} &	\textbf{75.46} &	\textbf{74.69}  \\
\bottomrule
\end{tabular}
\end{table*}

\subsection{Pretrained Capability Retention on External Benchmarks}

We further evaluate the models after MRCL training on diverse multimodal benchmarks to measure retention of pretrained capabilities. As shown in Tab.~\ref{tab:oodbenchmark_2b}, \ref{tab:oodbenchmark_4b}, and \ref{tab:oodbenchmark_8b}, the evaluation covers STEM benchmarks MMMU-Pro \citep{yue2025mmmu}, MathVerse \citep{zhang2024mathverse}, MathVision \citep{wang2024measuring}, and MathVista \citep{lu2024mathvista}; general VQA benchmarks RealworldQA \citep{realworldqa2024} and MMStar \citep{chen2024we}; the hallucination benchmark POPE \citep{li2023evaluating}; chart and document understanding benchmarks DocVQA \citep{mathew2021docvqa} and CharXiv \citep{wang2024charxiv}; and the grounding benchmark CountBenchQA \citep{paiss2023teaching}. We report F1 for POPE. For MMMU-Pro, we average accuracy over the standard 4-option, standard 10-option, and vision subsets. All other benchmarks use accuracy.

CPO preserves pretrained capabilities more effectively than the post-training baselines and surpasses the base model on several out-of-domain datasets. At the Qwen3-VL-2B scale, CPO achieves an average of 46.94\%, higher than GRPO at 42.69\% and GSPO at 43.31\%. On MathVista, CPO improves the base model from 48.00\% to 56.70\%. The advantage grows with model scale: on Qwen3-VL-4B, CPO reaches 64.14\%, exceeding both the base model average of 59.50\% and the GSPO average of 49.55\%; on Qwen3-VL-8B, CPO reaches 66.68\%, outperforming the base model at 59.71\% and GSPO at 52.51\%. In contrast, full SFT consistently degrades pretrained capabilities, dropping to 44.14\% average on Qwen3-VL-8B. These results show that CPO not only mitigates in-domain forgetting on MRCL but also better preserves the broader multimodal competence of the original model.

\begin{table}[!ht]
\centering
\caption{Comparison of Qwen3-VL-2B multimodal performance under different fine-tuning settings across diverse benchmarks.}
\label{tab:oodbenchmark_2b}
\begin{tabular}{ll  | >{\centering\arraybackslash}p{0.9cm} | *{5}{>{\centering\arraybackslash}p{0.9cm}}}
\toprule
\multirow{2}{*}[-1.0ex]{\textbf{Dimension}} & \multicolumn{1}{l}{\multirow{2}{*}[-1.0ex]{\textbf{Benchmark}}} & \multicolumn{6}{c}{\textbf{Qwen3-VL-2B}} \\
\cmidrule{3-8}
 & & \textbf{Base} & \textbf{FFT} & \textbf{LoRA} & \textbf{GRPO} & \textbf{GSPO} & \textbf{CPO} \\
\midrule
\multirow{4}{*}{STEM} 
    & MMMU\textsubscript{pro\_full}    & 33.74 & 23.35 & 27.65 & \textbf{30.41}          & 27.69 & \underline{29.48} \\
    & MathVista\textsubscript{mini}    & 48.00          & 41.60 & 51.30 & \underline{52.40}          & 51.60 & \textbf{56.70} \\
    & MathVision\textsubscript{TEST}   & 14.87          & 15.99 & 16.91 & 17.43          & \underline{18.19} & \textbf{19.93} \\
    & MathVerse\textsubscript{mini}    & 31.12          & 22.84 & 26.32 & \underline{32.51}          & 32.16 & \textbf{36.12} \\
\cline{1-1}
\multirow{2}{*}{General VQA} 
    & RealworldQA  & 57.78 & 51.11 & 54.25 & \underline{54.51}          & 46.01 & \textbf{54.77} \\
    & MMStar       & 52.80          & 48.60 & \underline{53.27} & \textbf{53.47} & 50.53 & 53.13 \\
\cline{1-1}
Hallucination & POPE       & 85.61          & 82.98 & \textbf{85.62} & 83.39          & 81.41 & \underline{84.47} \\
\cline{1-1}
\multirow{3}{*}{\shortstack[l]{Chart/Document\\Understanding}} 
    & DocVQA\textsubscript{VAL}       & 56.50          & \underline{55.21} & \textbf{69.74} & 43.84          & 42.81 & 48.10 \\
    & CharXiv\textsubscript{(DQ)}   & 60.98 & 38.70 & \textbf{46.02}          & 41.60          & 42.38 & \underline{45.75} \\
    & CharXiv\textsubscript{(RQ)}   & 26.70 & 21.00 & \textbf{25.20}          & 22.20          & \underline{23.60} & 21.90 \\
\cline{1-1}
Grounding & CountBenchQA & 83.30 & 62.32 & \textbf{71.28}          & 37.88          & 60.08 & \underline{65.99} \\
\midrule
\multicolumn{2}{c|}{\textbf{Average}} & 50.13 & 42.15 & \textbf{47.96} & 42.69 & 43.31 & \underline{46.94} \\
\bottomrule
\end{tabular}
\end{table}

\begin{table}[!ht]
\centering
\caption{Comparison of Qwen3-VL-4B multimodal performance under different fine-tuning settings across diverse benchmarks.}
\label{tab:oodbenchmark_4b}
\begin{tabular}{ll | >{\centering\arraybackslash}p{0.9cm} | *{5}{>{\centering\arraybackslash}p{0.9cm}}}
\toprule
\multirow{2}{*}[-1.0ex]{\textbf{Dimension}} & \multicolumn{1}{l}{\multirow{2}{*}[-1.0ex]{\textbf{Benchmark}}} & \multicolumn{6}{c}{\textbf{Qwen3-VL-4B}} \\
\cmidrule{3-8}
 & & \textbf{Base} & \textbf{FFT} & \textbf{LoRA} & \textbf{GRPO} & \textbf{GSPO} & \textbf{CPO} \\
\midrule
\multirow{4}{*}{STEM} 
    & MMMU\textsubscript{pro\_full}    & 51.20 & 25.70 & 39.67 & \underline{40.56} & 33.12 & \textbf{49.59} \\
    & MathVista\textsubscript{mini}    & 56.90          & 51.70 & 62.70 & \underline{65.20} & 57.40 & \textbf{73.90} \\
    & MathVision\textsubscript{TEST}   & 33.78          & 18.39 & 21.88 & \underline{29.81} & 22.43 & \textbf{45.00} \\
    & MathVerse\textsubscript{mini}    & 45.61          & 29.49 & 33.65 & \underline{45.61} & 37.84 & \textbf{57.69} \\
\cline{1-1} 
\multirow{2}{*}{General VQA} 
    & RealworldQA  & 68.24 & 51.63 & \underline{61.70} & 43.40 & 58.17 & \textbf{64.58} \\
    & MMStar       & 58.93          & 55.27 & 61.93 & \underline{63.20} & 51.20 & \textbf{67.73} \\
\cline{1-1}
Hallucination & POPE       & 87.16 & 85.49 & \textbf{87.09} & 83.17 & 84.99 & \underline{86.93} \\
\cline{1-1}
\multirow{3}{*}{\shortstack[l]{Chart/Document\\Understanding}} 
    & DocVQA\textsubscript{VAL}       & 65.04          & 53.51 & \underline{75.19} & 63.47 & 53.88 & \textbf{77.08} \\
    & CharXiv\textsubscript{(DQ)}   & 71.63          & 41.98 & \underline{55.23} & 48.93 & 42.15 & \textbf{75.15} \\
    & CharXiv\textsubscript{(RQ)}   & 33.10          & 23.70 & \underline{32.80} & 31.60 & 27.70 & \textbf{39.20} \\
\cline{1-1}
Grounding & CountBenchQA & 82.89 & 61.30 & \textbf{80.24} & 70.26 & \underline{76.17} & 68.64 \\
\midrule
\multicolumn{2}{c|}{\textbf{Average}} & 59.50 & 45.29 & \underline{55.64} & 53.20 & 49.55 & \textbf{64.14} \\
\bottomrule
\end{tabular}
\end{table}

\begin{table}[!ht]
\centering
\caption{Comparison of Qwen3-VL-8B multimodal performance under different fine-tuning settings across diverse benchmarks.}
\label{tab:oodbenchmark_8b}
\begin{tabular}{ll | >{\centering\arraybackslash}p{0.9cm} | *{5}{>{\centering\arraybackslash}p{0.9cm}}}
\toprule
\multirow{2}{*}[-1.0ex]{\textbf{Dimension}} & \multicolumn{1}{l}{\multirow{2}{*}[-1.0ex]{\textbf{Benchmark}}} & \multicolumn{6}{c}{\textbf{Qwen3-VL-8B}} \\
\cmidrule{3-8}
 & & \textbf{Base} & \textbf{FFT} & \textbf{LoRA} & \textbf{GRPO} & \textbf{GSPO} & \textbf{CPO} \\
\midrule
\multirow{4}{*}{STEM} 
    & MMMU\textsubscript{pro\_full}    & 56.71 & 24.91 & \underline{41.98} & 40.56 & 31.41 & \textbf{53.70} \\
    & MathVista\textsubscript{mini}    & 50.80          & 47.40 & 64.40 & 61.10 & \underline{64.90} & \textbf{76.00} \\
    & MathVision\textsubscript{TEST}   & 36.78          & 18.82 & 23.98 & \underline{31.78} & 26.35 & \textbf{51.61} \\
    & MathVerse\textsubscript{mini}    & 43.17          & 28.22 & 34.92 & \underline{44.19} & 42.84 & \textbf{57.51} \\
\cline{1-1} 
\multirow{2}{*}{General VQA} 
    & RealworldQA  & 66.93 & 55.29 & \textbf{64.84} & 27.45 & 59.48 & \underline{63.27} \\
    & MMStar       & 55.87          & 51.93 & \underline{64.13} & 59.47 & 61.67 & \textbf{71.67} \\
\cline{1-1}
Hallucination & POPE       & 76.73 & 82.02 & \textbf{85.81} & 82.51 & 83.20 & \underline{85.43} \\
\cline{1-1}
\multirow{3}{*}{\shortstack[l]{Chart/Document\\Understanding}} 
    & DocVQA\textsubscript{VAL}       & 68.16          & 54.16 & \textbf{79.92} & 40.77 & 47.86 & \underline{66.83} \\
    & CharXiv\textsubscript{(DQ)}   & 82.00          & 36.30 & \underline{58.60} & 50.62 & 50.25 & \textbf{77.53} \\
    & CharXiv\textsubscript{(RQ)}   & 32.70          & 23.40 & \underline{36.70} & 28.80 & 30.00 & \textbf{44.20} \\
\cline{1-1}
Grounding & CountBenchQA & 86.97 & 63.14 & 70.67 & 64.15 & \underline{79.63} & \textbf{85.74} \\
\midrule
\multicolumn{2}{c|}{\textbf{Average}} & 59.71 & 44.14 & \underline{56.90} & 48.31 & 52.51 & \textbf{66.68} \\
\bottomrule
\end{tabular}
\end{table}

\section{Related Work}
\label{sec:related works}

Continual post-training has emerged as a crucial paradigm for VLMs \citep{yu2026recent, wang2024comprehensive}. Prior research has focused mainly on mitigating catastrophic forgetting within the framework of SFT, while recent work has begun to study whether RL provides a more stable continual adaptation mechanism.

Reinforcement learning optimizes models through reward signals and has become central to eliciting deliberative reasoning capabilities, often described as System 2 thinking. In large-model post-training, RL has evolved from Proximal Policy Optimization (PPO) \citep{schulman2017proximal} to Group Relative Policy Optimization (GRPO) \citep{shao2024deepseekmath}, followed by algorithmic improvements such as DAPO \citep{yu2025dapo} and GSPO \citep{zheng2025group}. This progress has motivated growing interest in whether RL can support continual learning \citep{shenfeld2026rls, abel2023definition}. 

A series of recent studies \citep{shenfeld2026rls, zhang2026why, chen2025retaining, lai2025reinforcement} report that RL is more resistant to catastrophic forgetting than SFT. \citet{jin2025rl} further find that RL fine-tuning can heal out-of-distribution forgetting previously induced by SFT. However, the mechanism behind this robustness remains debated. \citet{shenfeld2026rls} attribute the effect to the KL divergence penalty in standard RL algorithms such as GRPO \citep{shao2024deepseekmath}, while \citet{chen2025retaining} and \citet{lai2025reinforcement} report that KL regularization plays a negligible role in preventing forgetting. Instead, \citet{zhang2026why} and \citet{chen2025retaining} argue that retention stems from on-policy data generation rather than intrinsic algorithmic differences. A recent study \citep{lou2026overcoming} also shows that RL can still suffer from non-negligible forgetting on classification tasks.

Despite these observations, catastrophic forgetting in complex multimodal reasoning tasks remains underexplored. In particular, prior work has not clearly distinguished current-task KL regularization from the prior-task behavioral constraint that directly controls forgetting. Our work addresses this gap by introducing a recent multimodal reasoning benchmark and deriving a replay-free surrogate for the prior-task behavioral KL objective.

\section{Conclusion}
\label{sec:conclusion}

In this work, we revisit the assumption that reinforcement learning is inherently robust to forgetting in continual VLM post-training. We introduce MRCL, a Multimodal Reasoning Continual Learning benchmark built from recent and diverse multimodal tasks, and show that continual RL still suffers from severe catastrophic forgetting under challenging evaluation settings. To address this challenge, we propose Continual Policy Optimization, a replay-free framework grounded in the prior-task behavioral KL objective. By relaxing the intractable historical KL constraint into a parameter-movement surrogate, CPO preserves prior-task behavior with negligible additional computational and memory overhead, making it scalable to large VLMs. Together, MRCL and CPO provide both a stronger benchmark and a principled replay-free optimization framework for studying continual post-training in modern vision-language models.

\bibliography{iclr2026_conference}
\bibliographystyle{iclr2026_conference}

\newpage

\appendix
\section{Theoretical Proofs}
\label{app:theory}

\subsection{Finite-Horizon Terminal-Reward KL Forgetting Bound}
\label{app:kl_forgetting_bound}

\begin{proposition}[Finite horizon KL forgetting bound]
\label{prop:app_kl_forgetting_bound}
Consider an old task $\mathcal{T}_{i-1}$ with prompts $x\sim\mathcal{D}_{i-1}$, finite-horizon autoregressive generations $y_{1:H}$, and terminal reward $R_{i-1}(x,y_{1:H})$. Let $\tilde{\pi}$ be the policy obtained after training on $\mathcal{T}_{i-1}$ and let $\pi$ be the policy after subsequent adaptation. Their sequence distributions are
\begin{equation}
    P_{\tilde{\pi}}(y_{1:H}\mid x)=\prod_{t=1}^{H}\tilde{\pi}(y_t\mid x,y_{<t}),
    \qquad
    P_{\pi}(y_{1:H}\mid x)=\prod_{t=1}^{H}\pi(y_t\mid x,y_{<t}).
\end{equation}
Define the task return
\begin{equation}
    \eta_{i-1}(\pi)=\mathbb{E}_{x\sim\mathcal{D}_{i-1},\,y\sim P_{\pi}(\cdot\mid x)}[R_{i-1}(x,y)]
\end{equation}
and assume the terminal reward has bounded range
\begin{equation}
    B_{i-1}=\sup_{x,y,y'}|R_{i-1}(x,y)-R_{i-1}(x,y')|<\infty.
\end{equation}
Let the normalized old-policy prefix visitation distribution be
\begin{equation}
    d_{\tilde{\pi}}^{\mathcal{T}_{i-1}}(t,x,y_{<t})=\frac{1}{H}\mathcal{D}_{i-1}(x)P_{\tilde{\pi}}(y_{<t}\mid x),
    \qquad t\in\{1,\ldots,H\},
\end{equation}
and define the corresponding mean token-level KL divergence as
\begin{equation}
    \bar{D}_{\mathrm{KL}}^{d_{\tilde{\pi}}^{\mathcal{T}_{i-1}}}(\tilde{\pi},\pi)
    =
    \mathbb{E}_{(t,x,y_{<t})\sim d_{\tilde{\pi}}^{\mathcal{T}_{i-1}}}
    \left[D_{\mathrm{KL}}\bigl(\tilde{\pi}(\cdot\mid x,y_{<t})\|\pi(\cdot\mid x,y_{<t})\bigr)\right].
\end{equation}
Then the old-task return degradation is bounded by
\begin{equation}
    \left|\eta_{i-1}(\tilde{\pi})-\eta_{i-1}(\pi)\right|
    \leq
    B_{i-1}\sqrt{\frac{H}{2}\bar{D}_{\mathrm{KL}}^{d_{\tilde{\pi}}^{\mathcal{T}_{i-1}}}(\tilde{\pi},\pi)}.
    \label{eq:app_kl_forgetting_bound}
\end{equation}
Moreover, the sequence-level KL and the normalized mean token-level KL satisfy
\begin{equation}
    \mathbb{E}_{x\sim\mathcal{D}_{i-1}}
    \left[D_{\mathrm{KL}}\bigl(P_{\tilde{\pi}}(\cdot\mid x)\|P_{\pi}(\cdot\mid x)\bigr)\right]
    =
    H\bar{D}_{\mathrm{KL}}^{d_{\tilde{\pi}}^{\mathcal{T}_{i-1}}}(\tilde{\pi},\pi).
    \label{eq:app_chain_rule_identity}
\end{equation}
\end{proposition}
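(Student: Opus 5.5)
We first establish the chain-rule identity \eqref{eq:app_chain_rule_identity}, since the main bound follows from it. Fix a prompt $x$. Because both sequence distributions factor autoregressively, the log-ratio splits as
\[
\log\frac{P_{\tilde{\pi}}(y_{1:H}\mid x)}{P_{\pi}(y_{1:H}\mid x)}=\sum_{t=1}^{H}\log\frac{\tilde{\pi}(y_t\mid x,y_{<t})}{\pi(y_t\mid x,y_{<t})}.
\]
We take the expectation under $P_{\tilde{\pi}}(\cdot\mid x)$ and use the tower property on each summand. Conditioning on $y_{<t}$ and integrating out $y_t\sim\tilde{\pi}(\cdot\mid x,y_{<t})$ gives $D_{\mathrm{KL}}(\tilde{\pi}(\cdot\mid x,y_{<t})\|\pi(\cdot\mid x,y_{<t}))$, while the later tokens $y_{>t}$ integrate to one. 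Hence
\[
D_{\mathrm{KL}}(P_{\tilde{\pi}}(\cdot\mid x)\|P_{\pi}(\cdot\mid x))=\sum_{t=1}^{H}\mathbb{E}_{y_{<t}\sim P_{\tilde{\pi}}(\cdot\mid x)}\bigl[D_{\mathrm{KL}}(\tilde{\pi}(\cdot\mid x,y_{<t})\|\pi(\cdot\mid x,y_{<t}))\bigr].
\]
Averaging over $x\sim\mathcal{D}_{i-1}$ and comparing with the definition of $d_{\tilde{\pi}}^{\mathcal{T}_{i-1}}$, which carries weight $1/H$ per step, turns the sum over $t$ into $H$ times the expectation under $d_{\tilde{\pi}}^{\mathcal{T}_{i-1}}$. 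This is exactly \eqref{eq:app_chain_rule_identity}. For EOS-terminated sequences we treat them as padded to length $H$, with deterministic padding tokens under both policies, so the KL terms at those steps vanish and the identity still holds.

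For the bound \eqref{eq:app_kl_forgetting_bound}, we fix $x$ and write
\[
\mathbb{E}_{P_{\tilde{\pi}}}[R]-\mathbb{E}_{P_{\pi}}[R]=\sum_y\bigl(P_{\tilde{\pi}}(y\mid x)-P_{\pi}(y\mid x)\bigr)\bigl(R(x,y)-c_x\bigr),
\]
where $c_x$ is any constant; this is valid because both distributions sum to one. Choosing $c_x$ as the midpoint of the range of $R(x,\cdot)$ makes $|R-c_x|\le B_{i-1}/2$, which gives
\[
\bigl|\mathbb{E}_{P_{\tilde{\pi}}}[R]-\mathbb{E}_{P_{\pi}}[R]\bigr|\le \tfrac{B_{i-1}}{2}\|P_{\tilde{\pi}}-P_{\pi}\|_1=B_{i-1}\,\mathrm{TV}(P_{\tilde{\pi}}(\cdot\mid x),P_{\pi}(\cdot\mid x)).
\]
Pinsker's inequality then yields $\mathrm{TV}\le\sqrt{\tfrac12 D_{\mathrm{KL}}(P_{\tilde{\pi}}(\cdot\mid x)\|P_{\pi}(\cdot\mid x))}$.

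Next we average over $x$. The triangle inequality moves the absolute value inside the expectation over $x$. Since $\sqrt{\cdot}$ is concave, Jensen's inequality gives $\mathbb{E}_x\sqrt{\tfrac12 D_x}\le\sqrt{\tfrac12\mathbb{E}_x D_x}$. Substituting \eqref{eq:app_chain_rule_identity} for $\mathbb{E}_x D_x$ produces $B_{i-1}\sqrt{\tfrac{H}{2}\bar{D}_{\mathrm{KL}}^{d_{\tilde{\pi}}^{\mathcal{T}_{i-1}}}(\tilde{\pi},\pi)}$, which is the claimed bound.

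The main point needing care is the centering step. A direct bound would give $2\sup|R|\,\mathrm{TV}$, whereas the statement uses the range $B_{i-1}$, so the midpoint constant $c_x$ is what yields the factor $B_{i-1}$ exactly; this requires $c_x$ to depend only on $x$, which it does. A secondary technical issue is justifying the tower-property manipulation when KL terms may be infinite. We handle it by noting that if the right-hand side is infinite the bound is trivial, and otherwise all terms are nonnegative, so we can exchange sums and expectations by Tonelli's theorem.
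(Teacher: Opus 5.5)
Your proposal is correct and follows essentially the same route as the paper. Both use the autoregressive chain rule for the sequence KL, a per-prompt bound of the reward difference by $B_{i-1}$ times total variation, Pinsker's inequality, the triangle inequality over $x$, and Jensen's inequality for the square root. The only cosmetic difference is in the per-prompt step. You center $R_{i-1}(x,\cdot)$ at the midpoint of its range and bound by $\tfrac{1}{2}\|P-Q\|_1$. The paper instead shifts by $\inf_y R_{i-1}(x,y)$, rescales to a $[0,1]$-valued function, and invokes the variational characterization of total variation. Both give exactly the factor $B_{i-1}$.
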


\begin{proof}
We prove the result in three steps: first controlling terminal-reward difference by sequence-level KL, then expanding sequence-level KL by the autoregressive chain rule, and finally substituting the normalized mean token-level KL.

\paragraph{Step 1: terminal reward difference is controlled by sequence-level KL.}
Fix a prompt $x$ and abbreviate
\begin{equation}
    P=P_{\tilde{\pi}}(\cdot\mid x),\qquad
    Q=P_{\pi}(\cdot\mid x),\qquad
    R(y)=R_{i-1}(x,y).
\end{equation}
If $B_{i-1}=0$, then $R_{i-1}(x,y)$ is constant in $y$ for every $x$, so the left-hand side of Eq.~\ref{eq:app_kl_forgetting_bound} is zero and the proposition is immediate. Hence assume $B_{i-1}>0$.

For this fixed prompt, let
\begin{equation}
    R_x^{-}=\inf_y R_{i-1}(x,y),\qquad
    f_x(y)=\frac{R_{i-1}(x,y)-R_x^{-}}{B_{i-1}}.
\end{equation}
By the definition of $B_{i-1}$, $0\leq f_x(y)\leq1$ for all $y$. Since $\sum_y P(y)=\sum_y Q(y)=1$, subtracting the constant $R_x^{-}$ does not change the difference of expectations:
\begin{align}
    \mathbb{E}_{P}[R(y)]-\mathbb{E}_{Q}[R(y)]
    &=
    \sum_y (P(y)-Q(y))R(y) \\
    &=
    \sum_y (P(y)-Q(y))(R(y)-R_x^{-}) \\
    &=
    B_{i-1}\sum_y (P(y)-Q(y))f_x(y) \\
    &=
    B_{i-1}\left(\mathbb{E}_{P}[f_x]-\mathbb{E}_{Q}[f_x]\right).
\end{align}
Using the variational characterization of total variation,
\begin{equation}
    D_{\mathrm{TV}}(P,Q)
    =
    \sup_{0\leq f\leq1}\left|\mathbb{E}_{P}[f]-\mathbb{E}_{Q}[f]\right|,
\end{equation}
we obtain
\begin{equation}
    \left|\mathbb{E}_{P}[R(y)]-\mathbb{E}_{Q}[R(y)]\right|
    \leq B_{i-1}D_{\mathrm{TV}}(P,Q).
    \label{eq:app_tv_reward}
\end{equation}
Pinsker's inequality further gives
\begin{equation}
    D_{\mathrm{TV}}(P,Q)
    \leq
    \sqrt{\frac{1}{2}D_{\mathrm{KL}}(P\|Q)}.
\end{equation}
Combining the previous two inequalities and returning to the original notation yields, for every prompt $x$,
\begin{equation}
    \left|
    \mathbb{E}_{y\sim P_{\tilde{\pi}}(\cdot\mid x)}[R_{i-1}(x,y)]
    -
    \mathbb{E}_{y\sim P_{\pi}(\cdot\mid x)}[R_{i-1}(x,y)]
    \right|
    \leq
    B_{i-1}\sqrt{\frac{1}{2}D_{\mathrm{KL}}\bigl(P_{\tilde{\pi}}(\cdot\mid x)\|P_{\pi}(\cdot\mid x)\bigr)}.
    \label{eq:app_promptwise_kl_bound}
\end{equation}

Now expand the task-return difference:
\begin{align}
    \eta_{i-1}(\tilde{\pi})-\eta_{i-1}(\pi)
    &=
    \mathbb{E}_{x\sim\mathcal{D}_{i-1}}
    \left[
    \mathbb{E}_{y\sim P_{\tilde{\pi}}(\cdot\mid x)}[R_{i-1}(x,y)]
    -
    \mathbb{E}_{y\sim P_{\pi}(\cdot\mid x)}[R_{i-1}(x,y)]
    \right].
\end{align}
Taking absolute values, using $|\mathbb{E}[Z]|\leq\mathbb{E}[|Z|]$, and applying Eq.~\ref{eq:app_promptwise_kl_bound},
\begin{align}
    \left|\eta_{i-1}(\tilde{\pi})-\eta_{i-1}(\pi)\right|
    &\leq
    \mathbb{E}_{x\sim\mathcal{D}_{i-1}}
    \left[
    \left|
    \mathbb{E}_{P_{\tilde{\pi}}(\cdot\mid x)}[R_{i-1}(x,y)]
    -
    \mathbb{E}_{P_{\pi}(\cdot\mid x)}[R_{i-1}(x,y)]
    \right|
    \right] \\
    &\leq
    B_{i-1}\mathbb{E}_{x\sim\mathcal{D}_{i-1}}
    \left[
    \sqrt{\frac{1}{2}D_{\mathrm{KL}}\bigl(P_{\tilde{\pi}}(\cdot\mid x)\|P_{\pi}(\cdot\mid x)\bigr)}
    \right].
\end{align}
Because $u\mapsto\sqrt{u}$ is concave on $\mathbb{R}_{\geq0}$, Jensen's inequality implies
\begin{equation}
    \mathbb{E}_{x}\left[\sqrt{Z_x}\right]
    \leq
    \sqrt{\mathbb{E}_{x}[Z_x]}
    \qquad\text{for }Z_x\geq0.
\end{equation}
With $Z_x=\frac{1}{2}D_{\mathrm{KL}}\bigl(P_{\tilde{\pi}}(\cdot\mid x)\|P_{\pi}(\cdot\mid x)\bigr)$, this gives
\begin{equation}
    \left|\eta_{i-1}(\tilde{\pi})-\eta_{i-1}(\pi)\right|
    \leq
    B_{i-1}\sqrt{\frac{1}{2}
    \mathbb{E}_{x\sim\mathcal{D}_{i-1}}
    \left[D_{\mathrm{KL}}\bigl(P_{\tilde{\pi}}(\cdot\mid x)\|P_{\pi}(\cdot\mid x)\bigr)\right]}.
    \label{eq:app_sequence_kl_bound}
\end{equation}

\paragraph{Step 2: sequence-level KL equals the sum of token-level KLs.}
For a fixed prompt $x$, start from the definition of sequence-level KL:
\begin{align}
    D_{\mathrm{KL}}\bigl(P_{\tilde{\pi}}(\cdot\mid x)\|P_{\pi}(\cdot\mid x)\bigr)
    &=
    \sum_{y_{1:H}}P_{\tilde{\pi}}(y_{1:H}\mid x)
    \log\frac{P_{\tilde{\pi}}(y_{1:H}\mid x)}{P_{\pi}(y_{1:H}\mid x)}.
\end{align}
By autoregressive factorization,
\begin{equation}
    \frac{P_{\tilde{\pi}}(y_{1:H}\mid x)}{P_{\pi}(y_{1:H}\mid x)}
    =
    \prod_{t=1}^{H}
    \frac{\tilde{\pi}(y_t\mid x,y_{<t})}{\pi(y_t\mid x,y_{<t})},
\end{equation}
and therefore
\begin{equation}
    \log\frac{P_{\tilde{\pi}}(y_{1:H}\mid x)}{P_{\pi}(y_{1:H}\mid x)}
    =
    \sum_{t=1}^{H}
    \log\frac{\tilde{\pi}(y_t\mid x,y_{<t})}{\pi(y_t\mid x,y_{<t})}.
\end{equation}
Substituting this identity into the sequence KL and exchanging the finite sums gives
\begin{align}
    D_{\mathrm{KL}}\bigl(P_{\tilde{\pi}}(\cdot\mid x)\|P_{\pi}(\cdot\mid x)\bigr)
    &=
    \sum_{t=1}^{H}
    \sum_{y_{1:H}}P_{\tilde{\pi}}(y_{1:H}\mid x)
    \log\frac{\tilde{\pi}(y_t\mid x,y_{<t})}{\pi(y_t\mid x,y_{<t})}.
    \label{eq:app_chain_expand_full_sequence}
\end{align}
For a fixed $t$, decompose the sequence as $(y_{<t},y_t,y_{>t})$. Under $P_{\tilde{\pi}}$,
\begin{equation}
    P_{\tilde{\pi}}(y_{1:H}\mid x)
    =
    P_{\tilde{\pi}}(y_{<t}\mid x)
    \tilde{\pi}(y_t\mid x,y_{<t})
    P_{\tilde{\pi}}(y_{>t}\mid x,y_{\leq t}).
\end{equation}
The log-ratio term in Eq.~\ref{eq:app_chain_expand_full_sequence} depends only on $(x,y_{<t},y_t)$, so summing over future tokens gives
\begin{equation}
    \sum_{y_{>t}}P_{\tilde{\pi}}(y_{>t}\mid x,y_{\leq t})=1.
\end{equation}
Thus the $t$-th term becomes
\begin{align}
    &\sum_{y_{1:H}}P_{\tilde{\pi}}(y_{1:H}\mid x)
    \log\frac{\tilde{\pi}(y_t\mid x,y_{<t})}{\pi(y_t\mid x,y_{<t})} \\
    &\quad=
    \sum_{y_{<t}}P_{\tilde{\pi}}(y_{<t}\mid x)
    \sum_{y_t}\tilde{\pi}(y_t\mid x,y_{<t})
    \log\frac{\tilde{\pi}(y_t\mid x,y_{<t})}{\pi(y_t\mid x,y_{<t})} \\
    &\quad=
    \sum_{y_{<t}}P_{\tilde{\pi}}(y_{<t}\mid x)
    D_{\mathrm{KL}}\bigl(\tilde{\pi}(\cdot\mid x,y_{<t})\|\pi(\cdot\mid x,y_{<t})\bigr) \\
    &\quad=
    \mathbb{E}_{y_{<t}\sim P_{\tilde{\pi}}(\cdot\mid x)}
    \left[
    D_{\mathrm{KL}}\bigl(\tilde{\pi}(\cdot\mid x,y_{<t})\|\pi(\cdot\mid x,y_{<t})\bigr)
    \right].
\end{align}
Consequently,
\begin{equation}
    D_{\mathrm{KL}}\bigl(P_{\tilde{\pi}}(\cdot\mid x)\|P_{\pi}(\cdot\mid x)\bigr)
    =
    \sum_{t=1}^{H}
    \mathbb{E}_{y_{<t}\sim P_{\tilde{\pi}}(\cdot\mid x)}
    \left[
    D_{\mathrm{KL}}\bigl(\tilde{\pi}(\cdot\mid x,y_{<t})\|\pi(\cdot\mid x,y_{<t})\bigr)
    \right].
    \label{eq:app_autoregressive_chain_rule}
\end{equation}

\paragraph{Step 3: convert the chain rule into the normalized mean token-level KL.}
First verify that $d_{\tilde{\pi}}^{\mathcal{T}_{i-1}}$ is normalized:
\begin{align}
    \sum_{t=1}^{H}\sum_x\sum_{y_{<t}}
    d_{\tilde{\pi}}^{\mathcal{T}_{i-1}}(t,x,y_{<t})
    &=
    \frac{1}{H}\sum_{t=1}^{H}\sum_x\mathcal{D}_{i-1}(x)
    \sum_{y_{<t}}P_{\tilde{\pi}}(y_{<t}\mid x) \\
    &=
    \frac{1}{H}\sum_{t=1}^{H}\sum_x\mathcal{D}_{i-1}(x) \\
    &=1.
\end{align}
Therefore, by the definition of the mean token-level KL,
\begin{align}
    H\bar{D}_{\mathrm{KL}}^{d_{\tilde{\pi}}^{\mathcal{T}_{i-1}}}(\tilde{\pi},\pi)
    &=
    \sum_{t=1}^{H}
    \mathbb{E}_{x\sim\mathcal{D}_{i-1},\,y_{<t}\sim P_{\tilde{\pi}}(\cdot\mid x)}
    \left[
    D_{\mathrm{KL}}\bigl(\tilde{\pi}(\cdot\mid x,y_{<t})\|\pi(\cdot\mid x,y_{<t})\bigr)
    \right] \\
    &=
    \mathbb{E}_{x\sim\mathcal{D}_{i-1}}
    \left[D_{\mathrm{KL}}\bigl(P_{\tilde{\pi}}(\cdot\mid x)\|P_{\pi}(\cdot\mid x)\bigr)\right],
\end{align}
where the second equality uses Eq.~\ref{eq:app_autoregressive_chain_rule}. This proves Eq.~\ref{eq:app_chain_rule_identity}. Substituting Eq.~\ref{eq:app_chain_rule_identity} into Eq.~\ref{eq:app_sequence_kl_bound} gives
\begin{equation}
    \left|\eta_{i-1}(\tilde{\pi})-\eta_{i-1}(\pi)\right|
    \leq
    B_{i-1}\sqrt{\frac{H}{2}\bar{D}_{\mathrm{KL}}^{d_{\tilde{\pi}}^{\mathcal{T}_{i-1}}}(\tilde{\pi},\pi)},
\end{equation}
which is the desired result.
\end{proof}

\subsection{Parameter Movement as a Noisy Fisher Proxy}
\label{app:movement_fisher_proxy}

\begin{proposition}[Parameter movement as a noisy Fisher proxy]
\label{prop:app_movement_fisher_proxy}
Consider training on task $\mathcal{T}_{i-1}$ under a locally stationary approximation, so that the policy, normalized prefix-token distribution, Fisher diagonal, and reward-score moments are treated as fixed over the interval in which parameter movement is measured. Let $h_t=(x,y_{<t})$ and define the token score
\begin{equation}
    \phi_k(t,h_t,a)=\frac{\partial}{\partial\theta_k}\log\pi_\theta(a\mid h_t).
\end{equation}
The task Fisher diagonal is
\begin{equation}
    F_{kk}^{\mathcal{T}_{i-1}}
    =
    \mathbb{E}_{(t,h_t)\sim d_{\theta}^{\mathcal{T}_{i-1}}}
    \left[
    \mathbb{E}_{a\sim\pi_\theta(\cdot\mid h_t)}[\phi_k(t,h_t,a)^2]
    \right].
\end{equation}
For terminal-reward policy-gradient training, define the token-sampled gradient contribution
\begin{equation}
    z_k=A(x,y)\phi_k(t,h_t,y_t),
\end{equation}
where $A(x,y)$ is a terminal reward or baseline-centered terminal advantage, and $(t,h_t,y_t)$ is sampled from the normalized prefix-token distribution. Let
\begin{equation}
    \mu_k=\mathbb{E}[z_k],\qquad
    V_k=\operatorname{Var}(z_k),\qquad
    \sigma_A^2=\mathbb{E}[A^2],
\end{equation}
and, for $F_{kk}^{\mathcal{T}_{i-1}}>0$, define
\begin{equation}
    q_k=\frac{\mathbb{E}[A^2\phi_k^2]}{\sigma_A^2F_{kk}^{\mathcal{T}_{i-1}}},
    \qquad
    \rho_k=\frac{\mu_k}{\sigma_A\sqrt{F_{kk}^{\mathcal{T}_{i-1}}}}.
\end{equation}
Then
\begin{equation}
    V_k=\sigma_A^2(q_k-\rho_k^2)F_{kk}^{\mathcal{T}_{i-1}}.
    \label{eq:app_gradient_variance}
\end{equation}
Under SGD with learning rate $\eta$ for $N$ stochastic updates, if the parameter gradient samples have mean $\mu_k$ and martingale-difference noise with variance $V_k$, the expected squared movement satisfies
\begin{equation}
    \mathbb{E}[|\Delta\theta_k|^2]
    =
    \eta^2\sigma_A^2
    \left[N^2\rho_k^2+N(q_k-\rho_k^2)\right]
    F_{kk}^{\mathcal{T}_{i-1}}.
    \label{eq:app_movement_fisher_relation}
\end{equation}
Equivalently, with
\begin{equation}
    C_k=N^2\rho_k^2+N(q_k-\rho_k^2),
\end{equation}
we have
\begin{equation}
    \mathbb{E}[|\Delta\theta_k|^2]=\eta^2\sigma_A^2C_kF_{kk}^{\mathcal{T}_{i-1}}.
\end{equation}
Thus, ranking parameters by expected squared movement is consistent with ranking by $F_{kk}^{\mathcal{T}_{i-1}}$ if $C_k$ is constant across coordinates, and is pairwise consistent whenever, for $F_{kk}^{\mathcal{T}_{i-1}}>F_{\ell\ell}^{\mathcal{T}_{i-1}}$,
\begin{equation}
    \frac{C_{\ell}}{C_k}<\frac{F_{kk}^{\mathcal{T}_{i-1}}}{F_{\ell\ell}^{\mathcal{T}_{i-1}}}.
    \label{eq:app_pairwise_rank_condition}
\end{equation}
In the noise-dominated regime, this requires approximate homogeneity of $q_k-\rho_k^2$; in the bias-dominated regime, it requires approximate homogeneity of $|\rho_k|$.
\end{proposition}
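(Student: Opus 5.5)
The argument splits into three parts: an algebraic identity for the single-sample variance $V_k$, a second-moment computation for the summed SGD increments, and a comparison of the resulting products that gives the ranking claims.

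\textbf{Step 1: the variance identity (\ref{eq:app_gradient_variance}).} Write $V_k=\mathbb{E}[z_k^2]-\mu_k^2$ with $\mathbb{E}[z_k^2]=\mathbb{E}[A^2\phi_k^2]$. By the definition of $q_k$, we have $\mathbb{E}[A^2\phi_k^2]=\sigma_A^2 q_k F_{kk}^{\mathcal{T}_{i-1}}$. By the definition of $\rho_k$, we have $\mu_k^2=\sigma_A^2\rho_k^2F_{kk}^{\mathcal{T}_{i-1}}$. Subtracting gives $V_k=\sigma_A^2(q_k-\rho_k^2)F_{kk}^{\mathcal{T}_{i-1}}$.

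The only care needed is the case $F_{kk}^{\mathcal{T}_{i-1}}=0$. Since $(t,h_t)$ is drawn from the same normalized prefix distribution that defines the Fisher, $F_{kk}=0$ forces $\phi_k=0$ almost surely. Then $z_k\equiv 0$, so $\mu_k=V_k=0$ and every identity holds trivially with any finite $C_k$. I will note this once and restrict to $F_{kk}>0$ afterwards.

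\textbf{Step 2: the squared movement.} Under SGD, $\Delta\theta_k=\eta\sum_{n=1}^N g_k^{(n)}$, where each update $g_k^{(n)}=\mu_k+\xi_k^{(n)}$. By local stationarity, the $\xi_k^{(n)}$ are martingale differences with conditional variance $V_k$. The sign convention (ascent versus descent) only flips the sign and does not affect the square.

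Expanding the square:
\begin{equation*}
\mathbb{E}[(\Delta\theta_k)^2]=\eta^2\Bigl(N^2\mu_k^2+2N\mu_k\,\mathbb{E}\Bigl[\sum_n\xi_k^{(n)}\Bigr]+\mathbb{E}\Bigl[\Bigl(\sum_n\xi_k^{(n)}\Bigr)^2\Bigr]\Bigr).
\end{equation*}
The cross term vanishes because each $\xi_k^{(n)}$ has zero conditional mean. For the same reason, the off-diagonal products $\mathbb{E}[\xi^{(n)}\xi^{(m)}]$ with $m<n$ vanish after conditioning on the history up to step $n-1$. This leaves $\eta^2(N^2\mu_k^2+NV_k)$. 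Substituting Step 1 gives $\eta^2\sigma_A^2[N^2\rho_k^2+N(q_k-\rho_k^2)]F_{kk}^{\mathcal{T}_{i-1}}$, which is (\ref{eq:app_movement_fisher_relation}) with the stated $C_k$.

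\textbf{Step 3: ranking.} Suppose $F_{kk}>F_{\ell\ell}$. The expected squared movement of $k$ exceeds that of $\ell$ iff $C_kF_{kk}>C_\ell F_{\ell\ell}$, i.e., iff $C_\ell/C_k<F_{kk}/F_{\ell\ell}$ when $C_k>0$. This is (\ref{eq:app_pairwise_rank_condition}), and a constant $C_k$ satisfies it automatically. Since $\mathbb{E}[(\Delta\theta_k)^2]$ is monotone in the same direction, this is the sense in which ranking by $s_k$ is "consistent" in expectation.

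For the two regimes:
\begin{itemize}
\item If $N|\rho_k|^2\ll q_k-\rho_k^2$ (noise-dominated), then $C_k\approx N(q_k-\rho_k^2)$, so the condition reduces to homogeneity of $q_k-\rho_k^2$.
\item If the $N^2$ term dominates (bias-dominated), then $C_k\approx N^2\rho_k^2$, so the condition requires homogeneity of $|\rho_k|$.
\end{itemize}

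\textbf{Main obstacle.} The computation is routine once the modelling assumptions are fixed. The delicate point is Step 2: the stationarity hypothesis is exactly what lets $\mu_k$ and $V_k$ be constant across updates and lets the cross terms vanish. I will state explicitly that the result concerns expected squared movement, not the realized $|\Delta\theta_k|$. Consistency for the actual ranking by $s_k^{(i-1)}$ therefore holds only in expectation, which is why the proposition describes movement as a noisy proxy.
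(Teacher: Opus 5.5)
Your proposal is correct and follows essentially the same route as the paper. Both proofs read the variance identity off the definitions of $q_k$ and $\rho_k$, use the martingale expansion to get $\eta^2(N^2\mu_k^2+NV_k)$, and obtain the ranking claims from the pairwise comparison $C_kF_{kk}^{\mathcal{T}_{i-1}}>C_\ell F_{\ell\ell}^{\mathcal{T}_{i-1}}$. The paper also motivates $z_k$ through the likelihood-ratio identity $\mathbb{E}[z_k]=H^{-1}\nabla_{\theta_k}\eta_{i-1}(\theta)$; the stated identities do not need this, and your extra treatment of the $F_{kk}^{\mathcal{T}_{i-1}}=0$ and $C_k>0$ edge cases is a harmless refinement.
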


\begin{proof}
We first relate the terminal-reward policy gradient to token-level scores. For the task return
\begin{equation}
    \eta_{i-1}(\theta)=\mathbb{E}_{x\sim\mathcal{D}_{i-1},\,y\sim P_{\theta}(\cdot\mid x)}[R_{i-1}(x,y)],
\end{equation}
the likelihood-ratio identity gives
\begin{equation}
    \nabla_{\theta_k}\eta_{i-1}(\theta)
    =
    \mathbb{E}\left[A(x,y)\sum_{t=1}^{H}\phi_k(t,h_t,y_t)\right],
\end{equation}
where subtracting any action-independent baseline from the terminal reward does not change the policy-gradient expectation. If $t$ is sampled uniformly from $\{1,\ldots,H\}$ after drawing a trajectory, then
\begin{equation}
    z_k=A(x,y)\phi_k(t,h_t,y_t)
\end{equation}
satisfies $\mathbb{E}[z_k]=H^{-1}\nabla_{\theta_k}\eta_{i-1}(\theta)$. The factor $H$ is shared across coordinates and therefore does not affect coordinate ranking.

Next, by definition of $q_k$ and $\rho_k$,
\begin{equation}
    \mathbb{E}[z_k^2]=\mathbb{E}[A^2\phi_k^2]=\sigma_A^2q_kF_{kk}^{\mathcal{T}_{i-1}},
    \qquad
    \mu_k^2=\sigma_A^2\rho_k^2F_{kk}^{\mathcal{T}_{i-1}}.
\end{equation}
Therefore,
\begin{equation}
    V_k=\operatorname{Var}(z_k)=\mathbb{E}[z_k^2]-\mu_k^2
    =\sigma_A^2(q_k-\rho_k^2)F_{kk}^{\mathcal{T}_{i-1}},
\end{equation}
which proves Eq.~\ref{eq:app_gradient_variance}.

Now consider $N$ locally stationary SGD updates in coordinate $k$:
\begin{equation}
    \theta_{r+1,k}=\theta_{r,k}-\eta z_{r,k},
    \qquad
    z_{r,k}=\mu_k+\xi_{r,k},
\end{equation}
where $\mathbb{E}[\xi_{r,k}\mid\mathcal{F}_{r-1}]=0$ and $\mathbb{E}[\xi_{r,k}^2\mid\mathcal{F}_{r-1}]=V_k$. The total movement is
\begin{equation}
    \Delta\theta_k=\theta_{N,k}-\theta_{0,k}
    =-
    \eta\sum_{r=1}^{N}z_{r,k}
    =-
    \eta(N\mu_k+M_{N,k}),
    \qquad
    M_{N,k}=\sum_{r=1}^{N}\xi_{r,k}.
\end{equation}
Using the martingale-difference property,
\begin{equation}
    \mathbb{E}[M_{N,k}]=0,
    \qquad
    \mathbb{E}[M_{N,k}^2]=\sum_{r=1}^{N}\mathbb{E}[\xi_{r,k}^2]=NV_k,
\end{equation}
where cross terms vanish because $\mathbb{E}[\xi_{r,k}\xi_{s,k}]=0$ for $r\neq s$. Hence
\begin{align}
    \mathbb{E}[|\Delta\theta_k|^2]
    &=
    \eta^2\mathbb{E}\left[(N\mu_k+M_{N,k})^2\right] \\
    &=
    \eta^2N^2\mu_k^2+
    2\eta^2N\mu_k\mathbb{E}[M_{N,k}]
    +
    \eta^2\mathbb{E}[M_{N,k}^2] \\
    &=
    \eta^2N^2\mu_k^2+
    \eta^2NV_k.
\end{align}
Substituting the expressions for $\mu_k^2$ and $V_k$ proves Eq.~\ref{eq:app_movement_fisher_relation}.

Finally, for two coordinates $k$ and $\ell$,
\begin{equation}
    \mathbb{E}[|\Delta\theta_k|^2]>
    \mathbb{E}[|\Delta\theta_{\ell}|^2]
    \quad\Longleftrightarrow\quad
    C_kF_{kk}^{\mathcal{T}_{i-1}}>C_{\ell}F_{\ell\ell}^{\mathcal{T}_{i-1}}.
\end{equation}
If $C_k$ is constant over coordinates, expected movement and Fisher diagonal induce the same ranking. More generally, when $F_{kk}^{\mathcal{T}_{i-1}}>F_{\ell\ell}^{\mathcal{T}_{i-1}}$, the Fisher ranking is preserved for the pair $(k,\ell)$ exactly under Eq.~\ref{eq:app_pairwise_rank_condition}.

The coefficient $C_k=N^2\rho_k^2+N(q_k-\rho_k^2)$ separates two regimes. If $N^2\rho_k^2\ll N(q_k-\rho_k^2)$, then $C_k\approx N(q_k-\rho_k^2)$, so rank consistency requires the noise coefficient $q_k-\rho_k^2$ to be sufficiently homogeneous or at least not large enough to violate Eq.~\ref{eq:app_pairwise_rank_condition}. If $N^2\rho_k^2\gg N(q_k-\rho_k^2)$, then $C_k\approx N^2\rho_k^2$, so rank consistency requires the normalized reward-score alignment magnitude $|\rho_k|$ to be sufficiently homogeneous. Since
\begin{equation}
    |\rho_k|=\frac{|\mathbb{E}[A\phi_k]|}{\sqrt{\mathbb{E}[A^2]}\sqrt{\mathbb{E}[\phi_k^2]}}\leq1
\end{equation}
by Cauchy-Schwarz, $\rho_k$ measures the normalized alignment between terminal advantage and the token score of coordinate $k$.

The realized movement $|\Delta\theta_k|$ is therefore a noisy proxy for $F_{kk}^{\mathcal{T}_{i-1}}$: it is governed by the Fisher diagonal in expectation, but its ranking is mediated by $C_k$ and by stochastic-gradient fluctuations. Under bounded or sub-Gaussian martingale noise, standard concentration inequalities imply that the realized ranking agrees with the expected ranking whenever the pairwise expected movement gap is larger than the corresponding concentration error.
\end{proof}

\section{CAI and CAD Metrics for Fig.~\ref{fig:intro}}

\label{sec:caicad}
To quantify the dynamics of plasticity and stability visualized in Fig. \ref{fig:intro}, we introduce two trajectory metrics corresponding to the axes: Cumulative mean Accuracy Improvement (CAI) and Cumulative mean Accuracy Delta (CAD). Let $T$ denote the total number of tasks, $a_{i,j}$ denote the accuracy on task $j$ after training on task $i$, and $z_j$ denote the zero-shot accuracy on task $j$.

The Cumulative mean Accuracy Improvement measures the average performance gain immediately after training on each task compared to its zero-shot baseline, accumulated from the first task through the current task $i$:
\begin{equation}
    \text{CAI}_i = \frac{1}{T} \sum_{j=1}^{i} (a_{j,j} - z_j)
\end{equation}

The Cumulative mean Accuracy Delta measures the accumulated performance reduction on previously learned tasks. In each sequential stage $k$ from 2 to $i$, we compute the average accuracy delta for all previous tasks $j < k$ after learning the new task $k$:
\begin{equation}
    \text{CAD}_i = \sum_{k=2}^{i} \frac{1}{k-1} \sum_{j=1}^{k-1} (a_{k,j} - a_{k-1,j})
\end{equation}
A negative $\text{CAD}_i$ value indicates cumulative forgetting, while a positive value suggests backward transfer that improves performance on previous tasks.

\section{$L_1$ versus $L_2$ Movement Regularization}
\label{sec:suppl_exp}

\textbf{Effect of $L_1$ versus $L_2$ movement regularization.}
We further compare CPO with masked $L_1$ movement regularization against a masked $L_2$ variant on Qwen3-VL-2B. Fig.~\ref{fig:coefficientL1L2} shows that both regularization terms are active during training, as their losses consistently rise and decay across continual stages. Therefore, the performance gap is not caused by an ineffective $L_2$ regularization signal. Instead, the difference lies in the induced movement pattern. As shown in Fig.~\ref{fig:weight_change_heatmaps}, masked $L_2$ causes dense movement across many top movement parameters, especially in later stages, whereas masked $L_1$ keeps most top movement parameters nearly unchanged and yields a much sparser movement pattern. In terms of continual learning performance, replacing masked $L_1$ with masked $L_2$ decreases MFN and MTA by $23\%$ and $15\%$, respectively.

\begin{figure}
    \centering
    \includegraphics[width=0.99\linewidth]{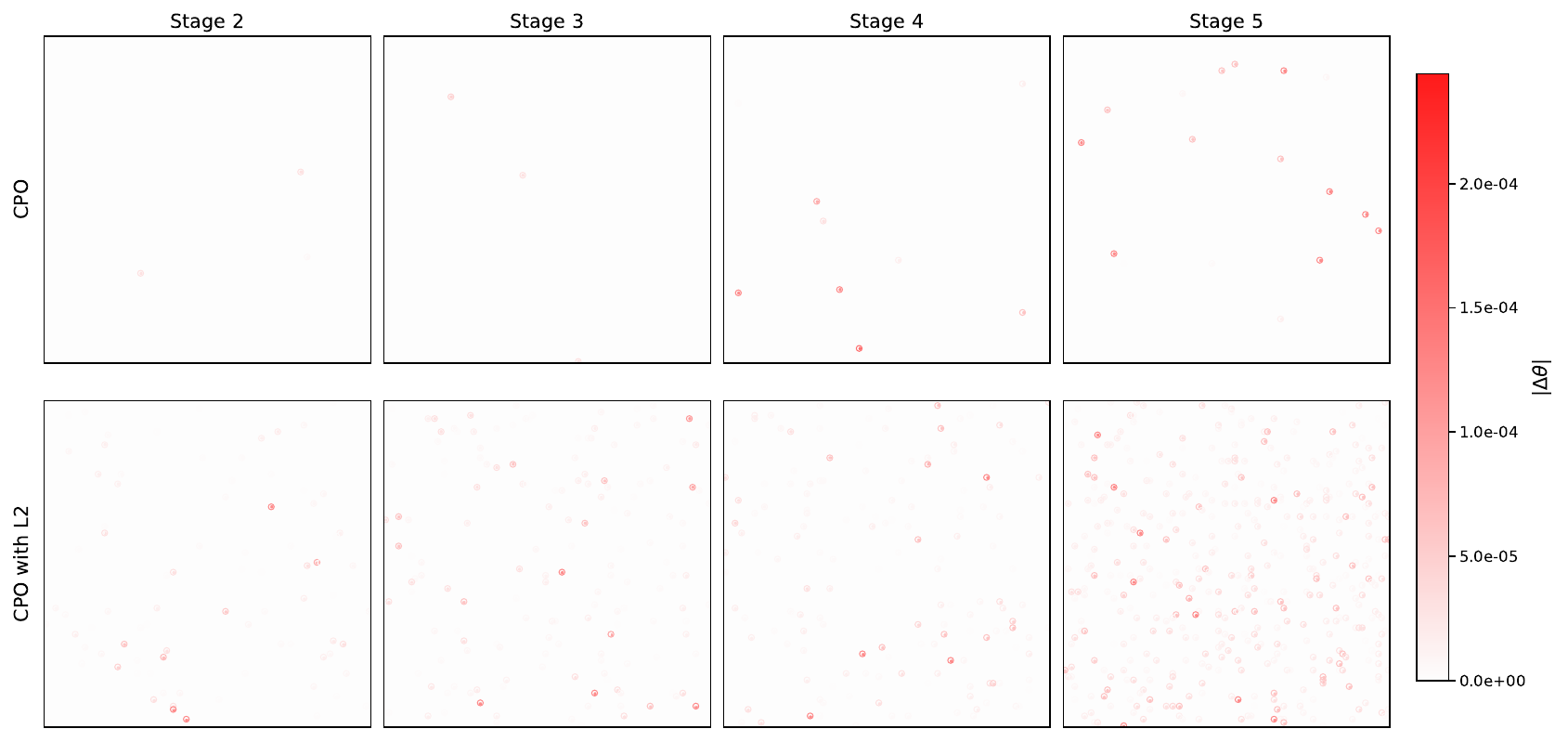}
    \caption{Parameter movement heatmaps of CPO and its masked $L_2$ variant on Qwen3-VL-2B. Masked $L_2$ induces dense movement across many top movement parameters, especially in later continual stages, while masked $L_1$ keeps most top movement parameters close to the previous checkpoint and produces a sparse movement pattern.}
    \label{fig:weight_change_heatmaps}
\end{figure}

\begin{figure}
    \centering
    \includegraphics[width=0.99\linewidth]{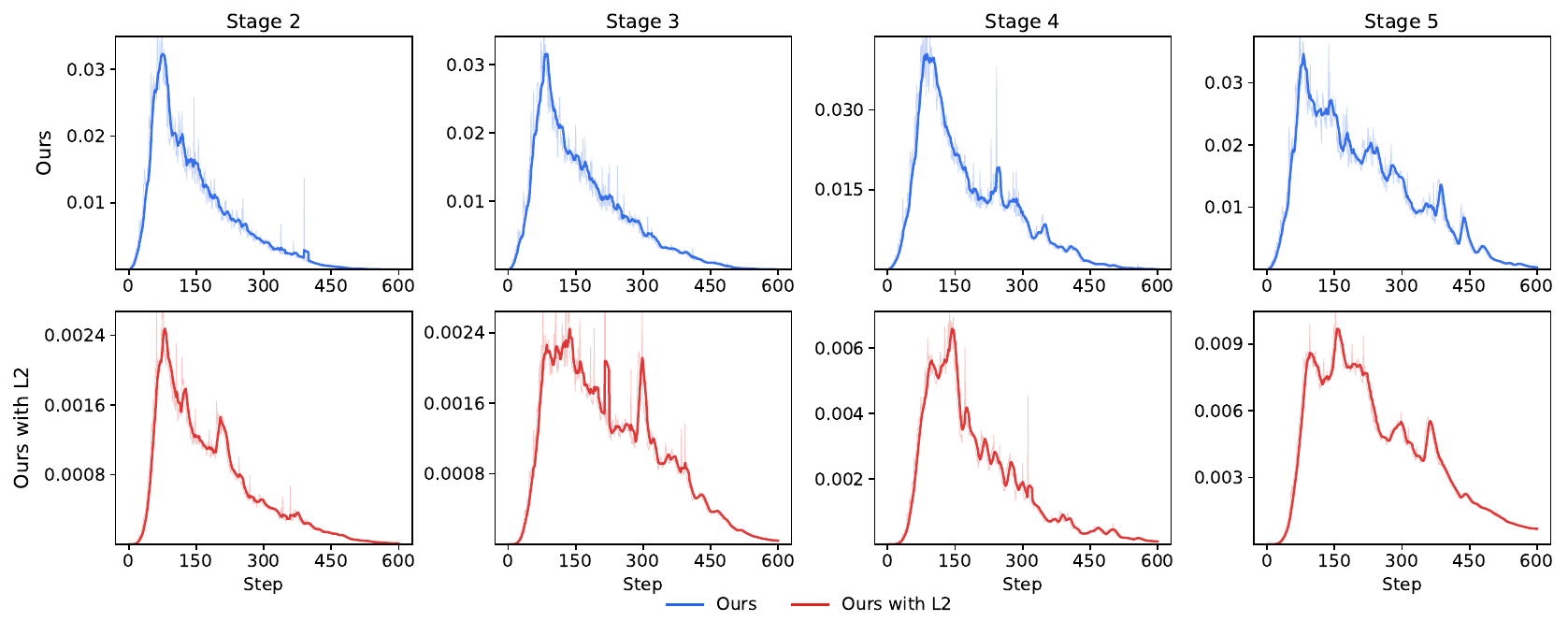}
    \caption{Regularization loss curves of CPO with masked $L_1$ movement regularization and its masked $L_2$ variant on Qwen3-VL-2B. The $L_1$ coefficient is set to 1e2 and the $L_2$ coefficient is set to 1e5. Both regularization losses remain active during training, indicating that the inferior performance of the $L_2$ variant is not due to an inactive regularization signal.}
    \label{fig:coefficientL1L2}
\end{figure}

\section{Stage-by-Stage MRCL Results}
\label{sec:result_details}

This section provides the complete stage-by-stage evaluation results for all compared methods on the MRCL benchmark. Tab.~\ref{tab:qwen2B-CL}, \ref{tab:qwen4B-CL}, and \ref{tab:qwen8B-CL} present the detailed accuracy matrices across the sequential post-training stages for Qwen3-VL-2B, Qwen3-VL-4B, and Qwen3-VL-8B respectively. In each matrix, each row represents the model performance evaluated on a series of tasks after completing the training on the task of that specific stage, illustrating the exact trajectory of learning new capabilities along with the retention of learned knowledge.

\begin{table*}[htbp]
  \centering
  \caption{Accuracy of a series of fine-tuning methods on the MRCL benchmark, evaluated on the Qwen3-VL-2B model. Each row represents the performance trained after the corresponding task.}
  \label{tab:qwen2B-CL}
  \captionsetup[subtable]{justification=centering} 
  
  \vspace{0.8\baselineskip}
  
  \begin{subtable}[htbp]{0.49\textwidth}
    \centering
    \caption{FFT}
    \label{tab:2b-fft-cl}
    \begin{tabular}{b{4.3em}b{1.6em}b{1.6em}b{1.6em}b{1.6em}b{1.6em}}
    \toprule
                 & \rotatebox{60}{MedBook} & \rotatebox{60}{Navigation} & \rotatebox{60}{We-Math2} & \rotatebox{60}{Puzzle} & \rotatebox{60}{FinMME} \\
    \midrule
     MedBook     & \makebox[1.6em]{97.96}  \\
     Navigation  & \makebox[1.6em]{90.66} & \makebox[1.6em]{67.28} \\
     We-Math2     & \makebox[1.6em]{84.04} & \makebox[1.6em]{45.22} & \makebox[1.6em]{40.52} \\
     Puzzle      & \makebox[1.6em]{48.56} & \makebox[1.6em]{12.32} & \makebox[1.6em]{28.81} & \makebox[1.6em]{92.25}  \\
     FinMME      & \makebox[1.6em]{44.65} & \makebox[1.6em]{0.55} & \makebox[1.6em]{33.41} & \makebox[1.6em]{83.25} & \makebox[1.6em]{33.60} \\
    \bottomrule
    \end{tabular}
  \end{subtable}\hfill
  \begin{subtable}[htbp]{0.49\textwidth}
    \centering
    \caption{LoRA}
    \label{tab:2b-lora-cl}
    \begin{tabular}{b{4.3em}b{1.6em}b{1.6em}b{1.6em}b{1.6em}b{1.6em}}
    \toprule
                 & \rotatebox{60}{MedBook} & \rotatebox{60}{Navigation} & \rotatebox{60}{We-Math2} & \rotatebox{60}{Puzzle} & \rotatebox{60}{FinMME} \\
    \midrule
     MedBook     & \makebox[1.6em]{94.40}  \\
     Navigation  & \makebox[1.6em]{93.04} & \makebox[1.6em]{63.42} \\
     We-Math2     & \makebox[1.6em]{88.46} & \makebox[1.6em]{36.40} & \makebox[1.6em]{36.81} \\
     Puzzle      & \makebox[1.6em]{78.61} & \makebox[1.6em]{13.05} & \makebox[1.6em]{34.86} & \makebox[1.6em]{90.25}  \\
     FinMME      & \makebox[1.6em]{52.80} & \makebox[1.6em]{18.01} & \makebox[1.6em]{35.47} & \makebox[1.6em]{85.00} & \makebox[1.6em]{32.35} \\
    \bottomrule
    \end{tabular}
  \end{subtable}

  \vspace{0.8\baselineskip}

  \begin{subtable}[htbp]{0.49\textwidth}
    \centering
    \caption{O-LoRA}
    \label{tab:2b-olora-cl}
    \begin{tabular}{b{4.3em}b{1.6em}b{1.6em}b{1.6em}b{1.6em}b{1.6em}}
    \toprule
                 & \rotatebox{60}{MedBook} & \rotatebox{60}{Navigation} & \rotatebox{60}{We-Math2} & \rotatebox{60}{Puzzle} & \rotatebox{60}{FinMME} \\
    \midrule
     MedBook     & \makebox[1.6em]{94.40}  \\
     Navigation  & \makebox[1.6em]{94.23} & \makebox[1.6em]{52.21} \\
     We-Math2     & \makebox[1.6em]{92.02} & \makebox[1.6em]{39.89} & \makebox[1.6em]{36.99} \\
     Puzzle      & \makebox[1.6em]{84.21} & \makebox[1.6em]{27.57} & \makebox[1.6em]{34.22} & \makebox[1.6em]{90.75}  \\
     FinMME      & \makebox[1.6em]{69.61} & \makebox[1.6em]{12.50} & \makebox[1.6em]{33.49} & \makebox[1.6em]{85.00} & \makebox[1.6em]{34.40} \\
    \bottomrule
    \end{tabular}
    
  \end{subtable}\hfill
  \begin{subtable}[htbp]{0.49\textwidth}
    \centering
    \caption{SEFE}
    \label{tab:2b-sefe-cl}
    \begin{tabular}{b{4.3em}b{1.6em}b{1.6em}b{1.6em}b{1.6em}b{1.6em}}
    \toprule
                 & \rotatebox{60}{MedBook} & \rotatebox{60}{Navigation} & \rotatebox{60}{We-Math2} & \rotatebox{60}{Puzzle} & \rotatebox{60}{FinMME} \\
    \midrule
     MedBook     & \makebox[1.6em]{94.40}  \\
     Navigation  & \makebox[1.6em]{92.70} & \makebox[1.6em]{63.24} \\
     We-Math2     & \makebox[1.6em]{92.87} & \makebox[1.6em]{47.06} & \makebox[1.6em]{36.76} \\
     Puzzle      & \makebox[1.6em]{90.32} & \makebox[1.6em]{33.27} & \makebox[1.6em]{35.73} & \makebox[1.6em]{89.50}  \\
     FinMME      & \makebox[1.6em]{85.57} & \makebox[1.6em]{20.59} & \makebox[1.6em]{35.63} & \makebox[1.6em]{81.25} & \makebox[1.6em]{34.20} \\
    \bottomrule
    \end{tabular}
  \end{subtable}

  \vspace{0.8\baselineskip}

  \begin{subtable}[htbp]{0.49\textwidth}
    \centering
    \caption{KeepLoRA}
    \label{tab:2b-keeplora-cl}
    \begin{tabular}{b{4.3em}b{1.6em}b{1.6em}b{1.6em}b{1.6em}b{1.6em}}
    \toprule
                 & \rotatebox{60}{MedBook} & \rotatebox{60}{Navigation} & \rotatebox{60}{We-Math2} & \rotatebox{60}{Puzzle} & \rotatebox{60}{FinMME} \\
    \midrule
     MedBook     & \makebox[1.6em]{98.13}  \\
     Navigation  & \makebox[1.6em]{92.53} & \makebox[1.6em]{62.68} \\
     We-Math2     & \makebox[1.6em]{92.19} & \makebox[1.6em]{49.45} & \makebox[1.6em]{38.60} \\
     Puzzle      & \makebox[1.6em]{79.97} & \makebox[1.6em]{40.07} & \makebox[1.6em]{33.93} & \makebox[1.6em]{88.75}  \\
     FinMME      & \makebox[1.6em]{63.50} & \makebox[1.6em]{35.29} & \makebox[1.6em]{33.95} & \makebox[1.6em]{86.00} & \makebox[1.6em]{31.05} \\
    \bottomrule
    \end{tabular}
    
  \end{subtable}\hfill
  \begin{subtable}[htbp]{0.49\textwidth}
    \centering
    \caption{GSPO}
    \label{tab:2b-GSPO-cl}
    \begin{tabular}{b{4.3em}b{1.6em}b{1.6em}b{1.6em}b{1.6em}b{1.6em}}
    \toprule
                 & \rotatebox{60}{MedBook} & \rotatebox{60}{Navigation} & \rotatebox{60}{We-Math2} & \rotatebox{60}{Puzzle} & \rotatebox{60}{FinMME} \\
    \midrule
     MedBook     & \makebox[1.6em]{89.98}  \\
     Navigation  & \makebox[1.6em]{86.08} & \makebox[1.6em]{66.91} \\
     We-Math2     & \makebox[1.6em]{85.40} & \makebox[1.6em]{38.79} & \makebox[1.6em]{56.72} \\
     Puzzle      & \makebox[1.6em]{79.12} & \makebox[1.6em]{27.57} & \makebox[1.6em]{46.19} & \makebox[1.6em]{86.50}  \\
     FinMME      & \makebox[1.6em]{69.27} & \makebox[1.6em]{11.95} & \makebox[1.6em]{45.20} & \makebox[1.6em]{75.75} & \makebox[1.6em]{41.05} \\
    \bottomrule
    \end{tabular}
  \end{subtable}
  \vspace{0.8\baselineskip}
\end{table*}

\begin{table*}[htbp]
  \centering
  \ContinuedFloat 
  \caption{Accuracy of a series of fine-tuning methods on the MRCL benchmark, evaluated on the Qwen3-VL-2B model. (Continued)}
  
  \setcounter{subtable}{6} 
  \begin{subtable}[htbp]{0.49\textwidth}
    \centering
    \caption{GRPO}
    \label{tab:2b-GRPOb0-cl}
    \begin{tabular}{b{4.3em}b{1.6em}b{1.6em}b{1.6em}b{1.6em}b{1.6em}}
    \toprule
                 & \rotatebox{60}{MedBook} & \rotatebox{60}{Navigation} & \rotatebox{60}{We-Math2} & \rotatebox{60}{Puzzle} & \rotatebox{60}{FinMME} \\
    \midrule
     MedBook     & \makebox[1.6em]{88.62}  \\
     Navigation  & \makebox[1.6em]{83.19} & \makebox[1.6em]{47.24} \\
     We-Math2     & \makebox[1.6em]{71.65} & \makebox[1.6em]{45.96} & \makebox[1.6em]{57.08} \\
     Puzzle      & \makebox[1.6em]{70.63} & \makebox[1.6em]{37.87} & \makebox[1.6em]{53.79} & \makebox[1.6em]{70.25}  \\
     FinMME      & \makebox[1.6em]{62.48} & \makebox[1.6em]{28.86} & \makebox[1.6em]{48.11} & \makebox[1.6em]{66.25} & \makebox[1.6em]{41.10} \\
    \bottomrule
    \end{tabular}
    
  \end{subtable}\hfill
  \begin{subtable}[htbp]{0.49\textwidth}
    \centering
    \caption{GRPO*}
    \label{tab:2b-GRPOb0.04-cl}
    \begin{tabular}{b{4.3em}b{1.6em}b{1.6em}b{1.6em}b{1.6em}b{1.6em}}
    \toprule
                 & \rotatebox{60}{MedBook} & \rotatebox{60}{Navigation} & \rotatebox{60}{We-Math2} & \rotatebox{60}{Puzzle} & \rotatebox{60}{FinMME} \\
    \midrule
     MedBook     & \makebox[1.6em]{78.95}  \\
     Navigation  & \makebox[1.6em]{75.72} & \makebox[1.6em]{25.74} \\
     We-Math2     & \makebox[1.6em]{75.38} & \makebox[1.6em]{27.76} & \makebox[1.6em]{53.26} \\
     Puzzle      & \makebox[1.6em]{78.10} & \makebox[1.6em]{18.20} & \makebox[1.6em]{50.66} & \makebox[1.6em]{68.50}  \\
     FinMME      & \makebox[1.6em]{76.23} & \makebox[1.6em]{17.10} & \makebox[1.6em]{51.24} & \makebox[1.6em]{61.50} & \makebox[1.6em]{42.15} \\
    \bottomrule
    \end{tabular}
  \end{subtable}
  \vspace{0.8\baselineskip}
  
  \begin{subtable}[htbp]{0.49\textwidth}
    \centering
    \caption{CPO}
    \label{tab:2b-ours-cl}
    \begin{tabular}{b{4.3em}b{1.6em}b{1.6em}b{1.6em}b{1.6em}b{1.6em}}
    \toprule
                 & \rotatebox{60}{MedBook} & \rotatebox{60}{Navigation} & \rotatebox{60}{We-Math2} & \rotatebox{60}{Puzzle} & \rotatebox{60}{FinMME} \\
    \midrule
     MedBook     & \makebox[1.6em]{89.98}  \\
     Navigation  & \makebox[1.6em]{88.96} & \makebox[1.6em]{68.01} \\
     We-Math2     & \makebox[1.6em]{86.25} & \makebox[1.6em]{46.32} & \makebox[1.6em]{55.02} \\
     Puzzle      & \makebox[1.6em]{85.40} & \makebox[1.6em]{42.65} & \makebox[1.6em]{50.38} & \makebox[1.6em]{85.75}  \\
     FinMME      & \makebox[1.6em]{76.06} & \makebox[1.6em]{43.38} & \makebox[1.6em]{49.03} & \makebox[1.6em]{78.00} & \makebox[1.6em]{42.65} \\
    \bottomrule
    \end{tabular}
  \end{subtable}
\end{table*}

\begin{table*}[htbp]
  \centering
  \caption{Accuracy of a series of fine-tuning methods on the MRCL benchmark, evaluated on the Qwen3-VL-4B model. Each row represents the performance trained after the corresponding task.}
  \label{tab:qwen4B-CL}
  \captionsetup[subtable]{justification=centering} 
  
  \vspace{0.8\baselineskip}
  
  \begin{subtable}[htbp]{0.49\textwidth}
    \centering
    \caption{FFT}
    \label{tab:4b-fft-cl}
    \begin{tabular}{b{4.3em}b{1.6em}b{1.6em}b{1.6em}b{1.6em}b{1.6em}}
    \toprule
                 & \rotatebox{60}{MedBook} & \rotatebox{60}{Navigation} & \rotatebox{60}{We-Math2} & \rotatebox{60}{Puzzle} & \rotatebox{60}{FinMME} \\
    \midrule
     MedBook     & \makebox[1.6em]{98.47}  \\
     Navigation  & \makebox[1.6em]{95.25} & \makebox[1.6em]{75.00} \\
     We-Math2     & \makebox[1.6em]{87.44} & \makebox[1.6em]{42.10} & \makebox[1.6em]{49.08} \\
     Puzzle      & \makebox[1.6em]{47.20} & \makebox[1.6em]{21.51} & \makebox[1.6em]{38.35} & \makebox[1.6em]{89.75}  \\
     FinMME      & \makebox[1.6em]{56.71} & \makebox[1.6em]{3.12} & \makebox[1.6em]{37.84} & \makebox[1.6em]{84.50} & \makebox[1.6em]{37.90} \\
    \bottomrule
    \end{tabular}
  \end{subtable}\hfill
  \begin{subtable}[htbp]{0.49\textwidth}
    \centering
    \caption{LoRA}
    \label{tab:4b-lora-cl}
    \begin{tabular}{b{4.3em}b{1.6em}b{1.6em}b{1.6em}b{1.6em}b{1.6em}}
    \toprule
                 & \rotatebox{60}{MedBook} & \rotatebox{60}{Navigation} & \rotatebox{60}{We-Math2} & \rotatebox{60}{Puzzle} & \rotatebox{60}{FinMME} \\
    \midrule
     MedBook     & \makebox[1.6em]{98.13}  \\
     Navigation  & \makebox[1.6em]{97.96} & \makebox[1.6em]{65.62} \\
     We-Math2     & \makebox[1.6em]{96.43} & \makebox[1.6em]{57.72} & \makebox[1.6em]{47.60} \\
     Puzzle      & \makebox[1.6em]{91.68} & \makebox[1.6em]{36.76} & \makebox[1.6em]{44.99} & \makebox[1.6em]{92.50}  \\
     FinMME      & \makebox[1.6em]{85.57} & \makebox[1.6em]{25.37} & \makebox[1.6em]{43.81} & \makebox[1.6em]{85.75} & \makebox[1.6em]{40.20} \\
    \bottomrule
    \end{tabular}
  \end{subtable}

  \vspace{0.8\baselineskip}

  \begin{subtable}[htbp]{0.49\textwidth}
    \centering
    \caption{O-LoRA}
    \label{tab:4b-olora-cl}
    \begin{tabular}{b{4.3em}b{1.6em}b{1.6em}b{1.6em}b{1.6em}b{1.6em}}
    \toprule
                 & \rotatebox{60}{MedBook} & \rotatebox{60}{Navigation} & \rotatebox{60}{We-Math2} & \rotatebox{60}{Puzzle} & \rotatebox{60}{FinMME} \\
    \midrule
     MedBook     & \makebox[1.6em]{98.13}  \\
     Navigation  & \makebox[1.6em]{96.94} & \makebox[1.6em]{61.03} \\
     We-Math2     & \makebox[1.6em]{96.77} & \makebox[1.6em]{54.41} & \makebox[1.6em]{45.44} \\
     Puzzle      & \makebox[1.6em]{92.87} & \makebox[1.6em]{44.67} & \makebox[1.6em]{44.21} & \makebox[1.6em]{92.75}  \\
     FinMME      & \makebox[1.6em]{89.30} & \makebox[1.6em]{37.68} & \makebox[1.6em]{41.98} & \makebox[1.6em]{93.25} & \makebox[1.6em]{39.85} \\
    \bottomrule
    \end{tabular}
    
  \end{subtable}\hfill
  \begin{subtable}[htbp]{0.49\textwidth}
    \centering
    \caption{SEFE}
    \label{tab:4b-sefe-cl}
    \begin{tabular}{b{4.3em}b{1.6em}b{1.6em}b{1.6em}b{1.6em}b{1.6em}}
    \toprule
                 & \rotatebox{60}{MedBook} & \rotatebox{60}{Navigation} & \rotatebox{60}{We-Math2} & \rotatebox{60}{Puzzle} & \rotatebox{60}{FinMME} \\
    \midrule
     MedBook     & \makebox[1.6em]{98.13}  \\
     Navigation  & \makebox[1.6em]{97.62} & \makebox[1.6em]{67.46} \\
     We-Math2     & \makebox[1.6em]{97.45} & \makebox[1.6em]{60.48} & \makebox[1.6em]{44.59} \\
     Puzzle      & \makebox[1.6em]{95.93} & \makebox[1.6em]{60.11} & \makebox[1.6em]{44.30} & \makebox[1.6em]{92.00}  \\
     FinMME      & \makebox[1.6em]{94.74} & \makebox[1.6em]{55.15} & \makebox[1.6em]{43.48} & \makebox[1.6em]{91.00} & \makebox[1.6em]{40.30} \\
    \bottomrule
    \end{tabular}
  \end{subtable}

  \vspace{0.8\baselineskip}

\end{table*}

\begin{table*}[htbp]
  \centering
  \ContinuedFloat 
  \caption{Accuracy of a series of fine-tuning methods on the MRCL benchmark, evaluated on the Qwen3-VL-4B model.(Continued)}
  \vspace{0.8\baselineskip}
  \setcounter{subtable}{4} 
  \begin{subtable}[htbp]{0.49\textwidth}
    \centering
    \caption{KeepLoRA}
    \label{tab:4b-keeplora-cl}
    \begin{tabular}{b{4.3em}b{1.6em}b{1.6em}b{1.6em}b{1.6em}b{1.6em}}
    \toprule
                 & \rotatebox{60}{MedBook} & \rotatebox{60}{Navigation} & \rotatebox{60}{We-Math2} & \rotatebox{60}{Puzzle} & \rotatebox{60}{FinMME} \\
    \midrule
     MedBook     & \makebox[1.6em]{96.26}  \\
     Navigation  & \makebox[1.6em]{95.42} & \makebox[1.6em]{74.82} \\
     We-Math2     & \makebox[1.6em]{96.77} & \makebox[1.6em]{69.85} & \makebox[1.6em]{47.01} \\
     Puzzle      & \makebox[1.6em]{96.26} & \makebox[1.6em]{64.34} & \makebox[1.6em]{46.69} & \makebox[1.6em]{93.50}  \\
     FinMME      & \makebox[1.6em]{95.42} & \makebox[1.6em]{65.26} & \makebox[1.6em]{46.51} & \makebox[1.6em]{93.75} & \makebox[1.6em]{37.90} \\
    \bottomrule
    \end{tabular}
    
  \end{subtable}\hfill
  \begin{subtable}[htbp]{0.49\textwidth}
    \centering
    \caption{GSPO}
    \label{tab:4b-GSPO-cl}
    \begin{tabular}{b{4.3em}b{1.6em}b{1.6em}b{1.6em}b{1.6em}b{1.6em}}
    \toprule
                 & \rotatebox{60}{MedBook} & \rotatebox{60}{Navigation} & \rotatebox{60}{We-Math2} & \rotatebox{60}{Puzzle} & \rotatebox{60}{FinMME} \\
    \midrule
     MedBook     & \makebox[1.6em]{92.53}  \\
     Navigation  & \makebox[1.6em]{87.61} & \makebox[1.6em]{81.62} \\
     We-Math2     & \makebox[1.6em]{83.19} & \makebox[1.6em]{75.55} & \makebox[1.6em]{65.10} \\
     Puzzle      & \makebox[1.6em]{78.27} & \makebox[1.6em]{35.66} & \makebox[1.6em]{47.38} & \makebox[1.6em]{87.25}  \\
     FinMME      & \makebox[1.6em]{65.03} & \makebox[1.6em]{30.70} & \makebox[1.6em]{45.17} & \makebox[1.6em]{83.00} & \makebox[1.6em]{44.80} \\
    \bottomrule
    \end{tabular}
  \end{subtable}
  \vspace{0.8\baselineskip}
  
  \begin{subtable}[htbp]{0.49\textwidth}
    \centering
    \caption{GRPO}
    \label{tab:4b-GRPOb0-cl}
    \begin{tabular}{b{4.3em}b{1.6em}b{1.6em}b{1.6em}b{1.6em}b{1.6em}}
    \toprule
                 & \rotatebox{60}{MedBook} & \rotatebox{60}{Navigation} & \rotatebox{60}{We-Math2} & \rotatebox{60}{Puzzle} & \rotatebox{60}{FinMME} \\
    \midrule
     MedBook     & \makebox[1.6em]{91.51}  \\
     Navigation  & \makebox[1.6em]{86.08} & \makebox[1.6em]{55.88} \\
     We-Math2     & \makebox[1.6em]{70.29} & \makebox[1.6em]{39.52} & \makebox[1.6em]{62.55} \\
     Puzzle      & \makebox[1.6em]{66.21} & \makebox[1.6em]{32.35} & \makebox[1.6em]{59.94} & \makebox[1.6em]{78.75}  \\
     FinMME      & \makebox[1.6em]{70.63} & \makebox[1.6em]{39.34} & \makebox[1.6em]{58.18} & \makebox[1.6em]{68.25} & \makebox[1.6em]{47.75} \\
    \bottomrule
    \end{tabular}
    
  \end{subtable}\hfill
  \begin{subtable}[htbp]{0.49\textwidth}
    \centering
    \caption{GRPO*}
    \label{tab:4b-GRPOb0.04-cl}
    \begin{tabular}{b{4.3em}b{1.6em}b{1.6em}b{1.6em}b{1.6em}b{1.6em}}
    \toprule
                 & \rotatebox{60}{MedBook} & \rotatebox{60}{Navigation} & \rotatebox{60}{We-Math2} & \rotatebox{60}{Puzzle} & \rotatebox{60}{FinMME} \\
    \midrule
     MedBook     & \makebox[1.6em]{67.74}  \\
     Navigation  & \makebox[1.6em]{66.72} & \makebox[1.6em]{60.66} \\
     We-Math2     & \makebox[1.6em]{63.16} & \makebox[1.6em]{53.68} & \makebox[1.6em]{62.69} \\
     Puzzle      & \makebox[1.6em]{61.46} & \makebox[1.6em]{42.83} & \makebox[1.6em]{63.11} & \makebox[1.6em]{69.25}  \\
     FinMME      & \makebox[1.6em]{63.16} & \makebox[1.6em]{40.26} & \makebox[1.6em]{62.84} & \makebox[1.6em]{65.50} & \makebox[1.6em]{48.50} \\
    \bottomrule
    \end{tabular}
  \end{subtable}

  \vspace{0.8\baselineskip}
  \begin{subtable}[htbp]{0.49\textwidth}
    \centering
    \caption{CPO}
    \label{tab:4b-ours-cl}
    \begin{tabular}{b{4.3em}b{1.6em}b{1.6em}b{1.6em}b{1.6em}b{1.6em}}
    \toprule
                 & \rotatebox{60}{MedBook} & \rotatebox{60}{Navigation} & \rotatebox{60}{We-Math2} & \rotatebox{60}{Puzzle} & \rotatebox{60}{FinMME} \\
    \midrule
     MedBook     & \makebox[1.6em]{92.53}  \\
     Navigation  & \makebox[1.6em]{93.04} & \makebox[1.6em]{82.54} \\
     We-Math2     & \makebox[1.6em]{92.19} & \makebox[1.6em]{86.03} & \makebox[1.6em]{64.13} \\
     Puzzle      & \makebox[1.6em]{91.34} & \makebox[1.6em]{80.33} & \makebox[1.6em]{63.85} & \makebox[1.6em]{78.00}  \\
     FinMME      & \makebox[1.6em]{90.83} & \makebox[1.6em]{76.84} & \makebox[1.6em]{63.64} & \makebox[1.6em]{77.50} & \makebox[1.6em]{49.00} \\
    \bottomrule
    \end{tabular}
  \end{subtable}
\end{table*}

\begin{table*}[htbp]
  \centering
  \caption{Accuracy of a series of fine-tuning methods on the MRCL benchmark, evaluated on the Qwen3-VL-8B model. Each row represents the performance trained after the corresponding task.}
  \label{tab:qwen8B-CL}
  \captionsetup[subtable]{justification=centering} 
  
  \vspace{0.8\baselineskip}
  
  \begin{subtable}[htbp]{0.49\textwidth}
    \centering
    \caption{FFT}
    \label{tab:8b-fft-cl}
    \begin{tabular}{b{4.3em}b{1.6em}b{1.6em}b{1.6em}b{1.6em}b{1.6em}}
    \toprule
                 & \rotatebox{60}{MedBook} & \rotatebox{60}{Navigation} & \rotatebox{60}{We-Math2} & \rotatebox{60}{Puzzle} & \rotatebox{60}{FinMME} \\
    \midrule
     MedBook     & \makebox[1.6em]{100.00}  \\
     Navigation  & \makebox[1.6em]{100.00} & \makebox[1.6em]{84.19} \\
     We-Math2    & \makebox[1.6em]{99.32} & \makebox[1.6em]{77.02} & \makebox[1.6em]{50.06} \\
     Puzzle      & \makebox[1.6em]{39.73} & \makebox[1.6em]{52.02} & \makebox[1.6em]{38.87} & \makebox[1.6em]{35.50}  \\
     FinMME      & \makebox[1.6em]{79.80} & \makebox[1.6em]{33.46} & \makebox[1.6em]{38.94} & \makebox[1.6em]{35.00} & \makebox[1.6em]{32.75} \\
    \bottomrule
    \end{tabular}
  \end{subtable}\hfill
  \begin{subtable}[htbp]{0.49\textwidth}
    \centering
    \caption{LoRA}
    \label{tab:8b-lora-cl}
    \begin{tabular}{b{4.3em}b{1.6em}b{1.6em}b{1.6em}b{1.6em}b{1.6em}}
    \toprule
                 & \rotatebox{60}{MedBook} & \rotatebox{60}{Navigation} & \rotatebox{60}{We-Math2} & \rotatebox{60}{Puzzle} & \rotatebox{60}{FinMME} \\
    \midrule
     MedBook     & \makebox[1.6em]{100.00}  \\
     Navigation  & \makebox[1.6em]{100.00} & \makebox[1.6em]{68.20} \\
     We-Math2    & \makebox[1.6em]{99.49} & \makebox[1.6em]{52.21} & \makebox[1.6em]{48.66} \\
     Puzzle      & \makebox[1.6em]{97.45} & \makebox[1.6em]{51.29} & \makebox[1.6em]{46.31} & \makebox[1.6em]{93.75}  \\
     FinMME      & \makebox[1.6em]{94.57} & \makebox[1.6em]{42.46} & \makebox[1.6em]{45.11} & \makebox[1.6em]{91.50} & \makebox[1.6em]{41.20} \\
    \bottomrule
    \end{tabular}
  \end{subtable}

  \vspace{0.8\baselineskip}
\end{table*}

\begin{table*}[htbp]
  \centering
  \ContinuedFloat 
  \caption{Accuracy of a series of fine-tuning methods on the MRCL benchmark, evaluated on the Qwen3-VL-8B model.(Continued)}
  \vspace{0.8\baselineskip}
  \setcounter{subtable}{2} 

  \begin{subtable}[htbp]{0.49\textwidth}
    \centering
    \caption{O-LoRA}
    \label{tab:8b-olora-cl}
    \begin{tabular}{b{4.3em}b{1.6em}b{1.6em}b{1.6em}b{1.6em}b{1.6em}}
    \toprule
                 & \rotatebox{60}{MedBook} & \rotatebox{60}{Navigation} & \rotatebox{60}{We-Math2} & \rotatebox{60}{Puzzle} & \rotatebox{60}{FinMME} \\
    \midrule
     MedBook     & \makebox[1.6em]{100.00}  \\
     Navigation  & \makebox[1.6em]{99.83} & \makebox[1.6em]{53.49} \\
     We-Math2    & \makebox[1.6em]{99.66} & \makebox[1.6em]{42.28} & \makebox[1.6em]{46.84} \\
     Puzzle      & \makebox[1.6em]{99.49} & \makebox[1.6em]{42.46} & \makebox[1.6em]{45.03} & \makebox[1.6em]{92.75}  \\
     FinMME      & \makebox[1.6em]{98.64} & \makebox[1.6em]{35.11} & \makebox[1.6em]{44.36} & \makebox[1.6em]{86.50} & \makebox[1.6em]{42.65} \\
    \bottomrule
    \end{tabular}
     
  \end{subtable}\hfill
  \begin{subtable}[htbp]{0.49\textwidth}
    \centering
    \caption{SEFE}
    \label{tab:8b-sefe-cl}
    \begin{tabular}{b{4.3em}b{1.6em}b{1.6em}b{1.6em}b{1.6em}b{1.6em}}
    \toprule
                 & \rotatebox{60}{MedBook} & \rotatebox{60}{Navigation} & \rotatebox{60}{We-Math2} & \rotatebox{60}{Puzzle} & \rotatebox{60}{FinMME} \\
    \midrule
     MedBook     & \makebox[1.6em]{100.00}  \\
     Navigation  & \makebox[1.6em]{99.83} & \makebox[1.6em]{69.30} \\
     We-Math2    & \makebox[1.6em]{100.00} & \makebox[1.6em]{58.46} & \makebox[1.6em]{46.36} \\
     Puzzle      & \makebox[1.6em]{99.83} & \makebox[1.6em]{52.39} & \makebox[1.6em]{46.06} & \makebox[1.6em]{94.25}  \\
     FinMME      & \makebox[1.6em]{99.49} & \makebox[1.6em]{53.12} & \makebox[1.6em]{45.17} & \makebox[1.6em]{92.25} & \makebox[1.6em]{44.05} \\
    \bottomrule
    \end{tabular}
  \end{subtable}

  \vspace{0.8\baselineskip}

  \begin{subtable}[htbp]{0.49\textwidth}
    \centering
    \caption{KeepLoRA}
    \label{tab:8b-keeplora-cl}
    \begin{tabular}{b{4.3em}b{1.6em}b{1.6em}b{1.6em}b{1.6em}b{1.6em}}
    \toprule
                 & \rotatebox{60}{MedBook} & \rotatebox{60}{Navigation} & \rotatebox{60}{We-Math2} & \rotatebox{60}{Puzzle} & \rotatebox{60}{FinMME} \\
    \midrule
     MedBook     & \makebox[1.6em]{100.00}  \\
     Navigation  & \makebox[1.6em]{100.00} & \makebox[1.6em]{83.82} \\
     We-Math2    & \makebox[1.6em]{100.00} & \makebox[1.6em]{64.15} & \makebox[1.6em]{49.22} \\
     Puzzle      & \makebox[1.6em]{100.00} & \makebox[1.6em]{59.93} & \makebox[1.6em]{48.96} & \makebox[1.6em]{95.50}  \\
     FinMME      & \makebox[1.6em]{100.00} & \makebox[1.6em]{63.05} & \makebox[1.6em]{49.00} & \makebox[1.6em]{94.25} & \makebox[1.6em]{40.95} \\
    \bottomrule
    \end{tabular}
  \end{subtable}\hfill
  \begin{subtable}[htbp]{0.49\textwidth}
    \centering
    \caption{GSPO}
    \label{tab:8b-GSPO-cl}
    \begin{tabular}{b{4.3em}b{1.6em}b{1.6em}b{1.6em}b{1.6em}b{1.6em}}
    \toprule
                 & \rotatebox{60}{MedBook} & \rotatebox{60}{Navigation} & \rotatebox{60}{We-Math2} & \rotatebox{60}{Puzzle} & \rotatebox{60}{FinMME} \\
    \midrule
     MedBook     & \makebox[1.6em]{95.93}  \\
     Navigation  & \makebox[1.6em]{92.87} & \makebox[1.6em]{83.64} \\
     We-Math2    & \makebox[1.6em]{92.53} & \makebox[1.6em]{83.64} & \makebox[1.6em]{67.90} \\
     Puzzle      & \makebox[1.6em]{83.36} & \makebox[1.6em]{47.24} & \makebox[1.6em]{56.20} & \makebox[1.6em]{88.00}  \\
     FinMME      & \makebox[1.6em]{78.61} & \makebox[1.6em]{44.67} & \makebox[1.6em]{53.45} & \makebox[1.6em]{85.25} & \makebox[1.6em]{46.95} \\
    \bottomrule
    \end{tabular}
  \end{subtable}

  \vspace{0.8\baselineskip}

  \begin{subtable}[htbp]{0.49\textwidth}
    \centering
    \caption{GRPO}
    \label{tab:8b-GRPOb0-cl}
    \begin{tabular}{b{4.3em}b{1.6em}b{1.6em}b{1.6em}b{1.6em}b{1.6em}}
    \toprule
                 & \rotatebox{60}{MedBook} & \rotatebox{60}{Navigation} & \rotatebox{60}{We-Math2} & \rotatebox{60}{Puzzle} & \rotatebox{60}{FinMME} \\
    \midrule
     MedBook     & \makebox[1.6em]{90.83}  \\
     Navigation  & \makebox[1.6em]{81.49} & \makebox[1.6em]{43.38} \\
     We-Math2    & \makebox[1.6em]{85.06} & \makebox[1.6em]{20.96} & \makebox[1.6em]{62.70} \\
     Puzzle      & \makebox[1.6em]{77.25} & \makebox[1.6em]{14.89} & \makebox[1.6em]{59.28} & \makebox[1.6em]{82.25}  \\
     FinMME      & \makebox[1.6em]{73.51} & \makebox[1.6em]{6.62} & \makebox[1.6em]{55.43} & \makebox[1.6em]{72.25} & \makebox[1.6em]{45.80} \\
    \bottomrule
    \end{tabular}
  \end{subtable}\hfill
  \begin{subtable}[htbp]{0.49\textwidth}
    \centering
    \caption{CPO}
    \label{tab:8b-ours-cl}
    \begin{tabular}{b{4.3em}b{1.6em}b{1.6em}b{1.6em}b{1.6em}b{1.6em}}
    \toprule
                 & \rotatebox{60}{MedBook} & \rotatebox{60}{Navigation} & \rotatebox{60}{We-Math2} & \rotatebox{60}{Puzzle} & \rotatebox{60}{FinMME} \\
    \midrule
     MedBook     & \makebox[1.6em]{95.76}  \\
     Navigation  & \makebox[1.6em]{94.91} & \makebox[1.6em]{90.07} \\
     We-Math2    & \makebox[1.6em]{95.93} & \makebox[1.6em]{86.58} & \makebox[1.6em]{64.66} \\
     Puzzle      & \makebox[1.6em]{96.10} & \makebox[1.6em]{87.13} & \makebox[1.6em]{65.02} & \makebox[1.6em]{73.00}  \\
     FinMME      & \makebox[1.6em]{96.94} & \makebox[1.6em]{90.99} & \makebox[1.6em]{64.87} & \makebox[1.6em]{74.00} & \makebox[1.6em]{50.50} \\
    \bottomrule
    \end{tabular}
  \end{subtable}

  \vspace{0.8\baselineskip}
\end{table*}

\clearpage
\section{Prompt Templates}
\label{sec:prompts}
In the experiments, two prompt strategies are used. Under the Direct Prompts setting, the model is restricted to output only the final answer. The CoT Prompts elicits explicit step-by-step reasoning from the model before yielding the prediction, where the final answer is enclosed within boxed\{\}.
Concretely, we apply the Direct Prompts to the Zero-shot$^1$, Full Fine-Tuning and LoRA-based methods. The CoT Prompts are deployed for Zero-shot$^2$, GRPO, GSPO and CPO.
\begin{promptbox}[Direct Prompts: ]{lightgreen}
\textbf{MedBookVQA:}\\
      "You are an AI assistant helping to solve the following problem based on the image(s). 
      
      Instructions: 
      
      Please output the answer directly and NO extra thinking. 
      
      Rules for the final answer: 
      
      - Multiple Choice: Output the uppercase option letter ONLY (e.g., A, B, C, D, E...).
      
      \{question\}"
\\\\
\textbf{Navigation:}\\
"You are an AI assistant helping to solve the following problem based on the image(s). 

Instructions: 

Please output the answer directly and NO extra thinking. 

Note: There is one and only one shortest path. 

Rules for the final answer: 

- Only provide your solution as a sequence of capital letter which should be comma-separated. 

\{question\}"
\\\\
\textbf{We-Math2:}\\
      "You are an AI assistant helping to solve the following problem based on the image(s). 
      
      Instructions: 
      
      Please output the answer directly and NO extra thinking.
      
      Rules for the final answer: 
      
      - Multiple Choice: Output the uppercase option letter ONLY (e.g., A, B, C, D, E...). 
      
      - Yes/No: Output Yes or No ONLY. 
      
      - Numerical: Output the number ONLY. 
      
      - Fill-in-the-blank: Output the answer phrase or word directly. 
      
      - Math Formula: Output the LaTeX expression enclosed in single dollar signs (e.g., $x=y$). 
      
      \{question\}"
\\\\
\textbf{Puzzle:}\\
"You are an AI assistant helping to solve the following problem based on the image(s). 

Instructions: 

Please output the answer directly and NO extra thinking. 

Rules for the final answer: 

- Multiple Choice: Output the uppercase option letter ONLY (e.g., A, B, C, D, E...).

\{question\}"
\\\\
\textbf{FinMME:}\\
"You are an AI assistant helping to solve the following problem based on the image(s). 

Instructions: 

Please output the answer directly and NO extra thinking. 

Rules for the final answer: 

- Multiple Choice: If one or more options are correct, output the uppercase option letter(s) ONLY. If there are multiple answers, separate them with commas (e.g., A,C,D). 

- Numerical: Output the number ONLY.

\{question\}"
\end{promptbox}
\clearpage
\begin{promptbox}[CoT Prompts: ]{lightblue}
\textbf{MedBookVQA:}\\
      "You are an AI assistant helping to solve the following problem based on the image(s). 
      
      Instructions: 
      
      Please think step-by-step and put the final answer in \verb|\\boxed{{}}|.
      
      Rules for the final answer in \verb|\\boxed{{}}|: 
      
      - Multiple Choice: Output the uppercase option letter ONLY (e.g., A, B, C, D, E...).
      
      \{question\}"
\\\\
\textbf{Navigation:}\\
"You are an AI assistant helping to solve the following problem based on the image(s). 

Instructions: 

Please think step-by-step and put the final answer in \verb|\\boxed{{}}|.

Hints: 

You need to first determine the size of the grid, then determine the coordinates of the hole, player, and gift, and finally perform path planning.

Note: There is one and only one shortest path. 

Rules for the final answer in \verb|\\boxed{{}}|:

- Only provide your solution as a sequence of capital letter which should be comma-separated. 

\{question\}"
\\\\
\textbf{We-Math2:}\\
      "You are an AI assistant helping to solve the following problem based on the image(s). 
      
      Instructions: 
      
      Please think step-by-step and put the final answer in \verb|\\boxed{{}}|.
      
      Rules for the final answer in \verb|\\boxed{{}}|: 
      
      - Multiple Choice: Output the uppercase option letter ONLY (e.g., A, B, C, D, E...). 
      
      - Yes/No: Output Yes or No ONLY. 
      
      - Numerical: Output the number ONLY. 
      
      - Fill-in-the-blank: Output the answer phrase or word directly. 
      
      - Math Formula: Output the LaTeX expression enclosed in single dollar signs (e.g., $x=y$). 
      
      \{question\}"
\\\\
\textbf{Puzzle:}\\
"You are an AI assistant helping to solve the following problem based on the image(s). 

Instructions: 

Please think step-by-step and put the final answer in \verb|\\boxed{{}}|.

Rules for the final answer in \verb|\\boxed{{}}|: 

- Multiple Choice: Output the uppercase option letter ONLY (e.g., A, B, C, D, E...).

\{question\}"
\\\\
\textbf{FinMME:}\\
"You are an AI assistant helping to solve the following problem based on the image(s). 

Instructions: 

Please think step-by-step and put the final answer in \verb|\\boxed{{}}|.

Rules for the final answer in \verb|\\boxed{{}}|: 

- Multiple Choice: If one or more options are correct, output the uppercase option letter(s) ONLY. If there are multiple answers, separate them with commas (e.g., A,C,D). 

- Numerical: Output the number ONLY.

\{question\}"
\end{promptbox}

\section{Case Study}
\label{sec:Case Study}

This section presents a qualitative case study on Qwen3-VL-4B across diverse datasets to illustrate the evolution of model behavior during continual post-training. We compare model responses to identical test samples when evaluated immediately after learning each specific task versus after completing the entire sequential training process. The results show that while GSPO successfully acquires correct problem-solving strategies when evaluated immediately after training on a task, it subsequently suffers from severe cognitive collapse on those same tasks as the sequence progresses. For example, after finishing the final FinMME stage, the GSPO model fails to maintain mathematical precision, misinterprets grid dimensions in navigation, and generates heavily hallucinated answers on medical datasets. In contrast, CPO consistently preserves both the step-by-step reasoning structure and the visual comprehension acquired during earlier stages without degrading its logical chain.

\begin{outerbox}
    \begin{innerblock}{questionbg}
        {\large\bfseries\color{titlebrown}TASK: Puzzle} \\[0.3em]
        \InsertBoxR{0}{\includegraphics[width=0.34\textwidth]{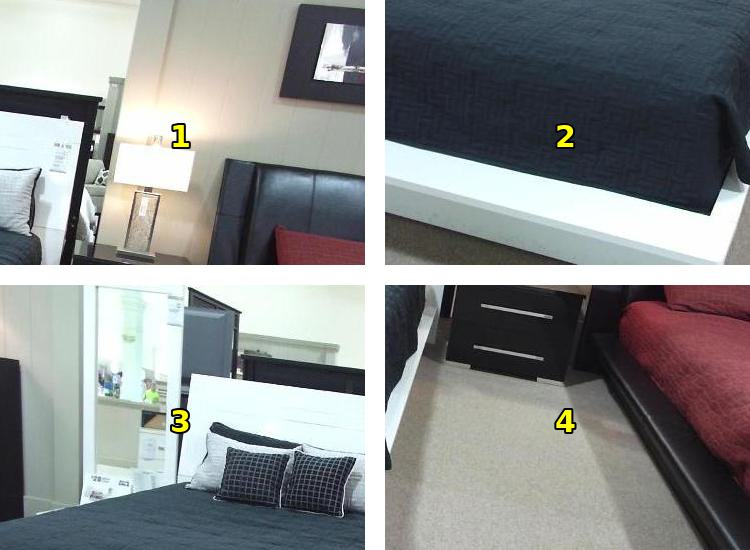}}
        \textbf{Question :} \\
        The image below is divided into four parts by white strips, forming a $2\times2$ jigsaw puzzle. The parts are labeled ``1'' (top-left), ``2'' (top-right), ``3'' (bottom-left), and ``4'' (bottom-right). These parts are from a single original image but have been shuffled. Your task is to determine the correct arrangement of the physically labeled parts to reconstruct the natural image.\\[0.6em]
        A. The top-left part should be Part 1; the top-right part should be Part 2; the bottom-left part should be Part 4; and the bottom-right part should be Part 3.\\
        B. The top-left part should be Part 2; the top-right part should be Part 1; the bottom-left part should be Part 3; and the bottom-right part should be Part 4.\\
        C. The top-left part should be Part 4; the top-right part should be Part 1; the bottom-left part should be Part 3; and the bottom-right part should be Part 2.\\
        D. The top-left part should be Part 3; the top-right part should be Part 1; the bottom-left part should be Part 2; and the bottom-right part should be Part 4.

    \end{innerblock}
    \begin{innerblock}{gspobg1}
        \textbf{GSPO (train after Puzzle):} \\[0.3em]
        Looking at the image, we see a 2x2 grid with four labeled parts. The question asks us to arrange them logically. Alternative D says: Top left = Part 3 (bedroom corner) Top right = Part 1 (lamp) Bottom left = Part 2 (nightstand) Bottom right = Part 4 (rug) But looking at the correct answer, the question might be testing logical reasoning. Assuming D is a logical arrangement, let me pick D.\\
        Answer: D $\backslash$$\backslash$boxed\{D\} \quad \textcolor{green!70!black}{\ding{52}}
    \end{innerblock}

    \begin{innerblock}{gspobg2}
        \textbf{GSPO (train after FinMME):} \\[0.3em]
        Looking at the logical arrangement: We see that the question might be testing reasoning. Alternative answer might be logical Answer: B or D Answer: B $\backslash$$\backslash$boxed\{B\} (based on logical reasoning) \quad \textcolor{red}{\ding{55}}
    \end{innerblock}
    \begin{innerblock}{oursbg}
        \textbf{CPO (train after FinMME):} \\[0.3em]
        Looking at the four provided images: -Part 1: Shows a lamp next to a bed with a black headboard, against a light wall. - Part 2: Shows the bottom edge of a dark bed with a white base. - Part 3: Shows a bed with a white headboard and dark patterned pillows, next to a mirror or glass door. - Part 4: Shows the floor area with a black nightstand and the corner of a red bed. Now let's try to logically arrange them into a 2x2 grid. The most logical arrangement would be: Top row: - Left: Part 1 --- because it has the lamp and wall, likely the upper part of the room. - Right: Part 3 --- because it has the white headboard and appears more ``upper'' in the room layout. Bottom row: - Left: Part 2 --- because it shows the bed base, which is lower. - Right: Part 4 --- because it shows the floor and nightstand, also lower. Wait --- let's reconsider. Actually, looking closely, Part 3 seems to be the largest and most complete upper section with the headboard and pillows. It likely belongs in the top-left or top-right. Let's try arranging: Top-left : Part 3 --- because it has the white headboard and looks like the main bed area. Top-right: Part 1 --- because it has the lamp and wall, which might be adjacent to the bed. Bottom-left: Part 2 --- because it shows the bed base, which is lower. Bottom-right: Part 4 --- because it shows the floor and nightstand. This gives us: Top row: [Part 3] [Part 1] Bottom row: [Part 2] [Part 4] This matches option D: $>$ D. The top-left part should be Part 3; the top-right part should be Part 1; the bottom-left part should be Part 2; and the bottom-right part should be Part 4. This arrangement makes sense visually. Therefore, the correct answer is $\backslash$$\backslash$boxed\{D\}. \quad \textcolor{green!70!black}{\ding{52}}
    \end{innerblock}
\end{outerbox}

\clearpage
\begin{outerbox}
    \begin{innerblock}{questionbg}
        {\large\bfseries\color{titlebrown}TASK: We-Math2.0} \\[0.3em]
        \InsertBoxR{0}{\includegraphics[width=0.3\textwidth]{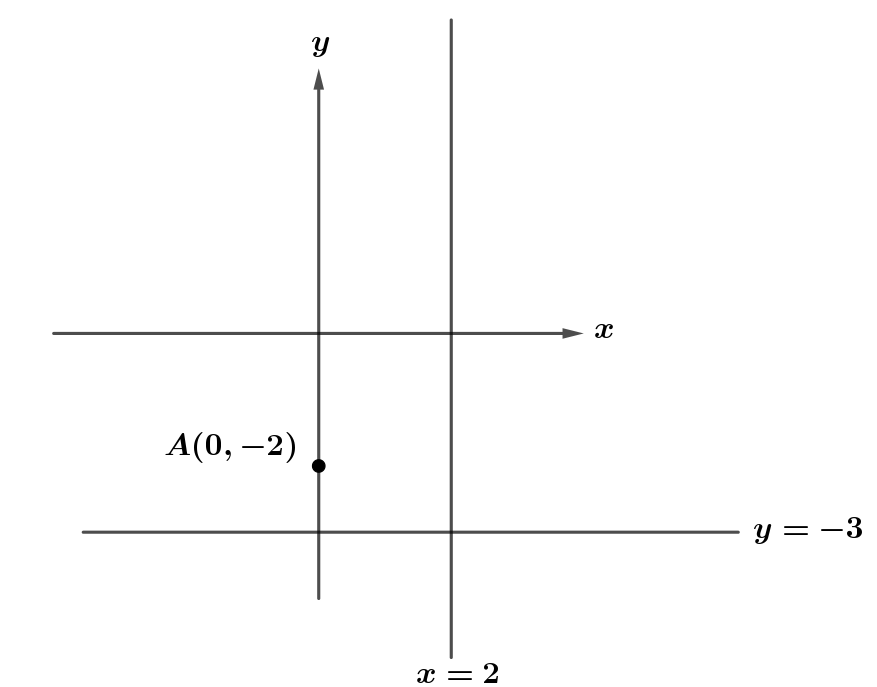}}
        \textbf{Question :} \\
        In the parking lot of a large supermarket, staff plan to set up a circular charging station area. It is known that the center of the charging station area is located on an east-west marking line $x = 2$ (the straight line 2 meters east from the parking lot entrance) inside the parking lot. The supermarket has set up two signs, A and B, in the parking lot to guide vehicles into the charging station area. The staff found that signs A and B are symmetric with respect to a north-south marking line $y = -3$ (the straight line 3 meters south from the parking lot entrance). If the coordinates of sign A are $(0, -2)$ (in meters), please help the staff calculate the standard equation of the circle for the circular charging station area, so as to reasonably plan the area.

    \end{innerblock}
    \begin{innerblock}{gspobg1}
        \textbf{GSPO (train after We-Math2):} \\[0.3em]
        We are given: - Sign A is at \((0, -2)\). - Signs A and B are symmetric with respect to the line \(y = -3\). - So, we reflect point \(A(0, -2)\) across the line \(y = -3\). - Distance from \((0, -2)\) to \(y = -3\) is \(| -2 - (-3) | = 1\) unit down. - Reflecting across \(y = -3\): go 1 unit further below \(y = -3\): \(y = -3 - 1 = -4\). - So, point B is at \((0, -4)\). Now, we need to find the center and radius of the circle. - The center lies on \(x = 2\) (east-west marking line). - So, center is at \((2, k)\) for some k. - We also know that the circle must pass through both points \(A(0, -2)\) and \(B(0, -4)\) — because these are the positions of signs A and B, which are symmetric and should be equidistant from the center, and also the circle should be centered along \(x=2\) and symmetric about \(y=-3\) (which is the axis of symmetry between A and B). Let’s assume the center is at \((2, -3)\) — which is exactly midway between \(y = -2\) and \(y = -4\), and also lies on \(x = 2\) — so it's a good candidate. Distance from center \((2, -3)\) to point \(A(0, -2)\): Distance = \(\sqrt{(2 - 0)^2 + (-3 + 2)^2} = \sqrt{4 + 1} = \sqrt{5}\) So, radius is \(\sqrt{5}\). Standard equation of the circle: \((x - 2)^2 + (y + 3)^2 = 5\) But wait — let's double check if there's another possibility. Is there a different center? For example, if we consider other centers on \(x=2\), but we want the circle to be centered such that it passes through both A and B — and we already have a center at \((2, -3)\) which is equidistant to both points (distance \(\sqrt{5}\)), so this is the best choice. So, the standard equation is: \((x - 2)^2 + (y + 3)^2 = 5\) Final Answer: $\backslash$$\backslash$boxed\{\((x-2)^2+(y+3)^2=5\)\}. \quad \textcolor{green!70!black}{\ding{52}}
    \end{innerblock}

    \begin{innerblock}{gspobg2}
        \textbf{GSPO (train after FinMME):} \\[0.3em]
        We see that we have a circle centered on \(x=2\), and we want to find the radius. We know that the signs are symmetric about \(y=-3\). \(A(0, -2)\) The reflection of A over \(y=-3\) would be: The difference between \(-2\) and \(-3\) is 1 So the reflection would be at \(y = -3 - 1 = -4\) But the question might be phrasing a reasonable circle around the entrance. Alternative answer might be logical Answer: Circle centered at \((2, -3)\) with radius 1 Equation: \((x-2)^2 + (y+3)^2 = 1\) Answer: \((x-2)^2 + (y+3)^2 = 1\) $\backslash$$\backslash$boxed\{\((x-2)^2 + (y+3)^2 = 1\)\}. \quad \textcolor{red}{\ding{55}}
    \end{innerblock}
    \begin{innerblock}{oursbg}
        \textbf{CPO (train after FinMME):} \\[0.3em]
        We are given: -The center of the circle lies on the vertical line \(x = 2\). -Signs A and B are symmetric with respect to the horizontal line \(y = -3\). - Coordinates of sign A: \((0, -2)\). Step 1: Find coordinates of sign B Since A and B are symmetric with respect to \(y = -3\), we reflect point A across this line. The formula for reflection over a horizontal line \(y = k\) is: \((x, y) \rightarrow (x, 2k - y)\) Here, \(k = -3\), so: \((0, -2) \rightarrow (0, 2(-3) - (-2)) = (0, -6 + 2) = (0, -4)\) So, sign B is at \((0, -4)\). Step 2: Find the center of the circle The center must lie on \(x = 2\), and since the circle is centered on the line \(x = 2\), and the signs A and B are symmetric about \(y = -3\), the center should lie on the midpoint of A and B vertically, because the symmetry line is \(y = -3\). Midpoint of A and B: \(\left( \frac{0+0}{2}, \frac{-2 + (-4)}{2} \right) = (0, -3)\) But the center must lie on \(x = 2\), so the center is at \((2, -3)\). Step 3: Find the radius The distance from the center \((2, -3)\) to point A \((0, -2)\): \(r = \sqrt{(2 - 0)^2 + (-3 - (-2))^2} = \sqrt{2^2 + (-1)^2} = \sqrt{4 + 1} = \sqrt{5}\) Step 4: Write the standard equation of the circle Standard form: \((x - h)^2 + (y - k)^2 = r^2\) Center: \((2, -3)\), radius \(\sqrt{5}\) So: \((x - 2)^2 + (y + 3)^2 = 5\). Final Answer: $\backslash$$\backslash$boxed\{\((x - 2)^2 + (y + 3)^2 = 5\)\}. \quad \textcolor{green!70!black}{\ding{52}}
    \end{innerblock}
\end{outerbox}

\clearpage
\begin{outerbox}
    \begin{innerblock}{questionbg}
        {\large\bfseries\color{titlebrown}TASK: Navigation} \\[0.3em]
        \InsertBoxR{0}{\includegraphics[width=0.25\textwidth]{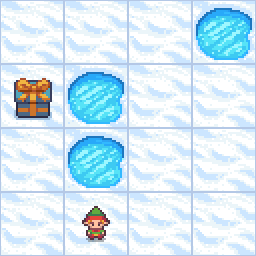}}
        \textbf{Question :} \\
        You are a maze solver. Your goal is to guide a player from the start to the goal on a grid map while avoiding holes.\\
        The player can move one square at a time in the directions left (L), right (R), up (U), or down (D). \\
        The frozen lake is not slippery; the player will always move in the intended direction.\\
        Moving off the edge or falling into a hole results in failure. Reaching the goal means success.\\
        Provide your solution as a sequence of moves, such as L,R,U,D. The moves should be comma-separated.

    \end{innerblock}
    \begin{innerblock}{gspobg1}
        \textbf{GSPO (train after Navigation):} \\[0.3em]
        First, let's map the grid to find the positions of the player, the gift (goal), and the holes. The grid is 4x4. Labeling the rows as R0, R1, R2, R3 and columns as C0, C1, C2, C3. From the image: - The gift (goal) is at R1,C0. - The player (start) is at R3,C1. - The holes (frozen lakes) are at R0,C3, R1,C1, R2,C1. - The rest are walkable tiles. The shortest path from the player at R3,C1 to the gift at R1,C0 must avoid the holes. Let's trace a possible path: Start at R3,C1. Move Up to R2,C1 — but R2,C1 is a hole → cannot move there. Move Left to R3,C0. Move Up to R2,C0. Move Up to R1,C0 — the gift! Alternative path: Move Up from R3,C1 to R2,C1 — hole → blocked. So, the only safe path is:  Start at R3,C1 Move Left to R3,C0 Move Up to R2,C0 Move Up to R1,C0 Yes! R2,C0 is walkable, and R1,C0 is the gift. No holes were encountered along this path. Answer: L, U, U $\backslash$$\backslash$boxed\{L, U, U\} \quad \textcolor{green!70!black}{\ding{52}}
    \end{innerblock}

    \begin{innerblock}{gspobg2}
        \textbf{GSPO (train after FinMME):} \\[0.3em]
        Looking at the grid: We see a 3x3 grid. Row 0: - [Gift box at [0,0] - Ice at [0,1] - Ice at [0,2] Row 1: - Ice at [1,0] - Empty Row 2: - Player at [2,0] - Empty The goal is the gift box at [0,0], the player at [2,0] Shortest path from [2,0] to [0,0]: Move UP, but we might need to avoid ice. Alternative answer might be logical Answer: U, U, U or R, L, etc. Let me say: Move UP from [2,0] to [1,0] but [1,0] is ice, so might be different approach Answer: Let me be precise Answer: U, U, U $\backslash$$\backslash$boxed\{U,U,U\} (answer might be phrasing differently) $\backslash$$\backslash$boxed\{U,U,U\} is the answer based on logical reasoning $\backslash$$\backslash$boxed\{U,U,U\} \quad \textcolor{red}{\ding{55}}
    \end{innerblock}
    \begin{innerblock}{oursbg}
        \textbf{CPO (train after FinMME):} \\[0.3em]
        We are given a 4x4 grid. Let’s label the rows and columns for clarity: Row 0: [0,0] [0,1] [0,2] [0,3]   Row 1: [1,0] [1,1] [1,2] [1,3]   Row 2: [2,0] [2,1] [2,2] [2,3]   Row 3: [3,0] [3,1] [3,2] [3,3] From the image: - Gift (goal) is at [1,0] - Player is at [3,1] - Holes are at [0,3], [1,1], [2,1] Player must go from [3,1] to [1,0], avoiding [1,1] and [2,1]. Possible path: Start at [3,1]   → Move UP to [2,1] → blocked by hole → invalid   → Move LEFT to [3,0]   → Move UP to [2,0]   → Move UP to [1,0] → goal!Path:   [3,1] → [3,0] → [2,0] → [1,0] Moves: Left (L), Up (U), Up (U) So the sequence is: L, U, U Final answer: L,U,U $\backslash$$\backslash$boxed\{L,U,U\} \quad \textcolor{green!70!black}{\ding{52}}
    \end{innerblock}
\end{outerbox}

\clearpage
\begin{outerbox}
    \begin{innerblock}{questionbg}
        {\large\bfseries\color{titlebrown}TASK: MedBookVQA} \\[0.3em]
        \InsertBoxR{0}{\includegraphics[width=0.35\textwidth]{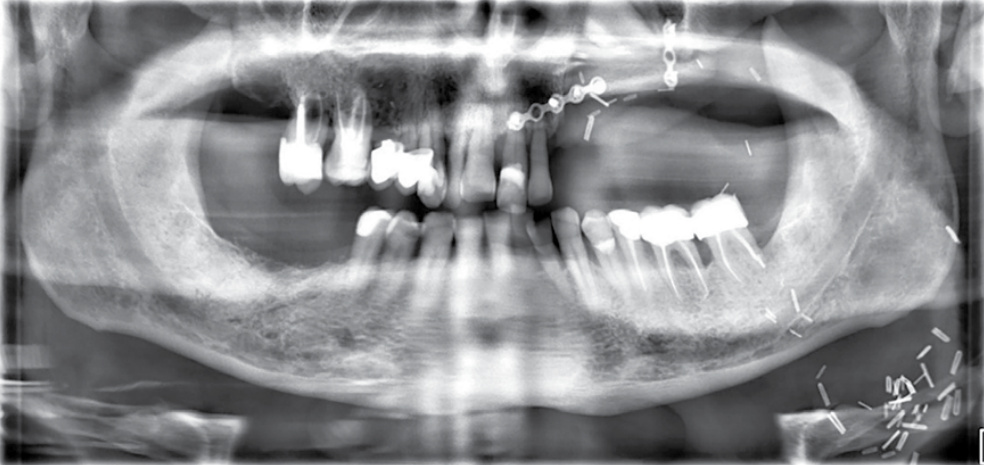}}
        \textbf{Question :} \\
        What anatomical structure is involved in the surgical procedure shown in the image? \\
        A. Upper nasal cavity\\
        B. Right mandible\\
        C. Inferior alveolar nerve\\
        D.  Left maxilla

    \end{innerblock}
    \begin{innerblock}{gspobg1}
        \textbf{GSPO (train after MedBookVQA):} \\[0.3em]
        The image shows a radiographic view of a dental or maxillofacial surgical field. The anatomical region highlighted by the surgical context — particularly the presence of bone, teeth, and surgical instruments — indicates that this is a maxillofacial area. The image appears to be a radiograph, likely a panoramic or periapical view, showing the left side of the maxilla (as indicated by the anatomical landmarks and surgical field). In many such cases, especially when dealing with surgical procedures involving the maxilla (e.g., osteotomy, implant placement, or tumor resection), the left maxilla is often the focus due to its anatomical prominence or surgical access. Thus, the correct answer is: D. Left maxilla $\backslash$$\backslash$boxed\{D\}. \quad \textcolor{green!70!black}{\ding{52}}
    \end{innerblock}

    \begin{innerblock}{gspobg2}
        \textbf{GSPO (train after FinMME):} \\[0.3em]
        Looking at the image, we see a cross-sectional view of a dental or facial region, with some small structures around the teeth. The question might be testing logical reasoning. Alternative answer might be phrasing Answer: Let me say Answer: B or C might be wrong Answer: Let me estimate Answer: A or B Answer: D might be the answer $\backslash$$\backslash$boxed\{A\} might be incorrect. Let me be precise. Answer: B. Answer: $\backslash$$\backslash$boxed\{B\} is not logical. Alternative answer might be different. Answer: $\backslash$$\backslash$boxed\{C\} is not correct. Answer: $\backslash$$\backslash$boxed\{A\} is answer. Answer: $\backslash$$\backslash$boxed\{A\} or $\backslash$$\backslash$boxed\{B\}. Answer: $\backslash$$\backslash$boxed\{D\} might be answer. Answer: $\backslash$$\backslash$boxed\{A\} is answer. Answer: $\backslash$$\backslash$boxed\{A\} (answer might be multiple choice). Answer: $\backslash$$\backslash$boxed\{A\} or $\backslash$$\backslash$boxed\{B\}. Answer: $\backslash$$\backslash$boxed\{A\} is answer. $\backslash$$\backslash$boxed\{A\} is answer. Answer: $\backslash$$\backslash$boxed\{A\} (based on logical reasoning). $\backslash$$\backslash$boxed\{A\}. \quad \textcolor{red}{\ding{55}}
    \end{innerblock}
    \begin{innerblock}{oursbg}
        \textbf{CPO (train after FinMME):} \\[0.3em]
        Looking closely at the provided panoramic radiograph, we can observe the dental arches and bone structure. The image clearly shows the lower jaw (mandible) with teeth and implants, as well as the upper jaw (maxilla) with dental work. However, the question asks about the anatomical structure involved in the surgical procedure shown. The key feature in the image is the presence of dental implants and surgical hardware in the upper jaw, which is the maxilla. The term “Left maxilla” refers to the left side of the upper jaw, which is visible in the image. Given the options: A. Upper nasal cavity — not directly visible or involved in this surgical context. B. Right mandible — the mandible is the lower jaw; it’s visible but not the primary focus here. C. Inferior alveolar nerve — a nerve, not a major surgical site in this context. D. Left maxilla — this is correct because the upper jaw (maxilla) is the area where the dental implants and surgical hardware are located, and the left side is clearly depicted. Therefore, the correct answer is D. Left maxilla. $\backslash$$\backslash$boxed\{D\} \quad \textcolor{green!70!black}{\ding{52}}
    \end{innerblock}
\end{outerbox}

\end{document}